\documentclass{article}
\usepackage{iclr2027_conference,times}

\usepackage{amsmath,amsfonts,bm}

\def\eqref#1{equation~\ref{#1}}
\def\1{\bm{1}}

\DeclareMathAlphabet{\mathsfit}{\encodingdefault}{\sfdefault}{m}{sl}
\SetMathAlphabet{\mathsfit}{bold}{\encodingdefault}{\sfdefault}{bx}{n}

\usepackage{xcolor}
\definecolor{lightblue}{RGB}{32,89,232}
\usepackage{graphicx}
\usepackage[backref, colorlinks=true, citecolor=lightblue]{hyperref}
\usepackage{url}
\usepackage{amsmath}
\usepackage{amssymb}
\usepackage{mathtools}
\usepackage{amsthm}
\usepackage{enumitem}
\usepackage{thm-restate}
\usepackage[ruled,vlined,noend,linesnumbered]{algorithm2e}
\usepackage[noend]{algorithmic}
\usepackage{subfigure}
\usepackage{multirow,makecell}
\usepackage{booktabs}
\usepackage{float}     % [H] placement specifier
\usepackage{placeins}  % \FloatBarrier
\usepackage{tikz}
\usepackage{pgfplots}
\pgfplotsset{compat=1.18}
\usepackage[capitalize,noabbrev]{cleveref}
\usepackage{fontawesome5}

\newcommand{\ourloop}{\textsc{LoopSpec}}

\newtheorem{theorem}{Theorem}

\newtheorem{lemma}[theorem]{Lemma}

\newtheorem{definition}[theorem]{Definition}

\newcommand{\abs}[1]{\left |#1\right|}
\newcommand{\tv}[1]{\mathrm{TV}\left({#1}\right)}

\newcommand{\ourosmall}{$\mathtt{Ouro\text{-}1.4B}$\xspace}
\newcommand{\ourolarge}{$\mathtt{Ouro\text{-}2.6B}$\xspace}

\newcommand{\ourosmallthinking}{$\mathtt{Ouro}\text{-}\mathtt{1.4B}\text{-}\mathtt{Thinking}$}
\newcommand{\ourolargethinking}{$\mathtt{Ouro}\text{-}\mathtt{2.6B}\text{-}\mathtt{Thinking}$}

\newcommand{\ravenllama}{$\mathtt{Raven\text{-}Llama\text{-}3.2}$\xspace}
\newcommand{\ravenolmo}{$\mathtt{Raven\text{-}OLMo\text{-}2\text{-}0425}$\xspace}

\title{\ourloop{}: Pipelined Self-Speculative Decoding for Looped Transformers}

\iclrfinalcopy
\author{
SangLyul Cho$^{1}$\thanks{Equal contribution.} \quad
Langqing Cui$^{2}$\footnotemark[1] \quad
Sehoon Kim$^{2}$ \quad
Dongsu Han$^{2}$ \quad
Insu Han$^{2}$\thanks{Corresponding author.}  \\
$^{1}$Seoul National University \qquad
$^{2}$KAIST
}

\begin{document}

\maketitle
\lhead{Preprint.}   % 반드시 \maketitle '뒤에' 와야 함

\begin{abstract}
Looped Transformers achieve strong performance with compact parameter sizes by repeatedly applying a shared stack of Transformer blocks across recurrent depths.
However, they incur higher decoding latency than standard Transformer models of comparable parameter size because shared weights are accessed at every recurrent depth.
To improve decoding efficiency, self-speculative decoding is particularly well suited to Looped Transformers, as their intermediate recurrent states can directly provide draft predictions without an auxiliary draft model.
We therefore propose \ourloop{}, a training-free self-speculative decoding framework tailored for Looped Transformers.
\ourloop{} extracts draft tokens from early recurrent states and operates in a pipelined manner, overlapping draft generation of future tokens with target verification of the current token.
To improve draft accuracy without excessive compute overhead, we introduce a selective second proposal from deeper recurrent depth while ensuring lossless decoding under both greedy and sampling regimes.
Furthermore, we derive the optimal proposal depths in closed form and show the prediction matches measurement.
Across reasoning and coding benchmarks, \ourloop{} achieves up to 6.83$\times$ inference speedup across diverse Looped Transformers.
\end{abstract}

\begin{center}
\faGlobe~\href{https://langq1225.github.io/loopspec/}{$\mathtt{Project\ Page:}$}
{\small\url{https://langq1225.github.io/loopspec/}} \\[3pt]
\faGithub~\href{https://github.com/kaist-flexml-lab/loopspec}{$\mathtt{Code:}$} {\small\url{https://github.com/kaist-flexml-lab/loopspec}}
\end{center}

\section{Introduction}

Looped Transformers \citep{geiping2025scaling,zhu2025scaling,mcleish2026teaching,nanbeige2026nanbeige} have recently emerged as an efficient architectural paradigm that iteratively applies shared Transformer blocks across recurrent depths, drastically reducing the total parameter count. 
For instance, a model of four blocks applied 8 times achieves the effective depth of a 32-layer model while storing only four layers of parameters.
By expanding computational depth through this recurrence, Looped Transformers match or outperform standard Transformers of comparable parameter count on reasoning tasks.
However, executing shared Transformer blocks across multiple recurrent depths repeatedly fetches identical parameters within each token generation step, leaving inference memory-bandwidth bound thus increasing decoding latency \citep{hooper2023speed}.

While speculative decoding with a separately trained draft model \citep{leviathan2023fast,chen2023accelerating,li2025eagle3,chen2026dflash} has emerged as a promising approach for lossless inference acceleration, it introduces additional training costs, potential distribution mismatch between draft and target models, and extra memory.
Self-speculative decoding \citep{zhang2024draft,elhoushi2024layerskip,cha2026knapspec} avoids these limitations by using intermediate computations of the target model itself to generate draft tokens, eliminating the need for a separate draft model.
Looped Transformers are naturally suited to this paradigm, since the intermediate representation in each recurrent depth forms a sequence of increasingly refined proposals toward the final prediction.

\begin{figure}[t]
\centering
\includegraphics[width=1\linewidth]{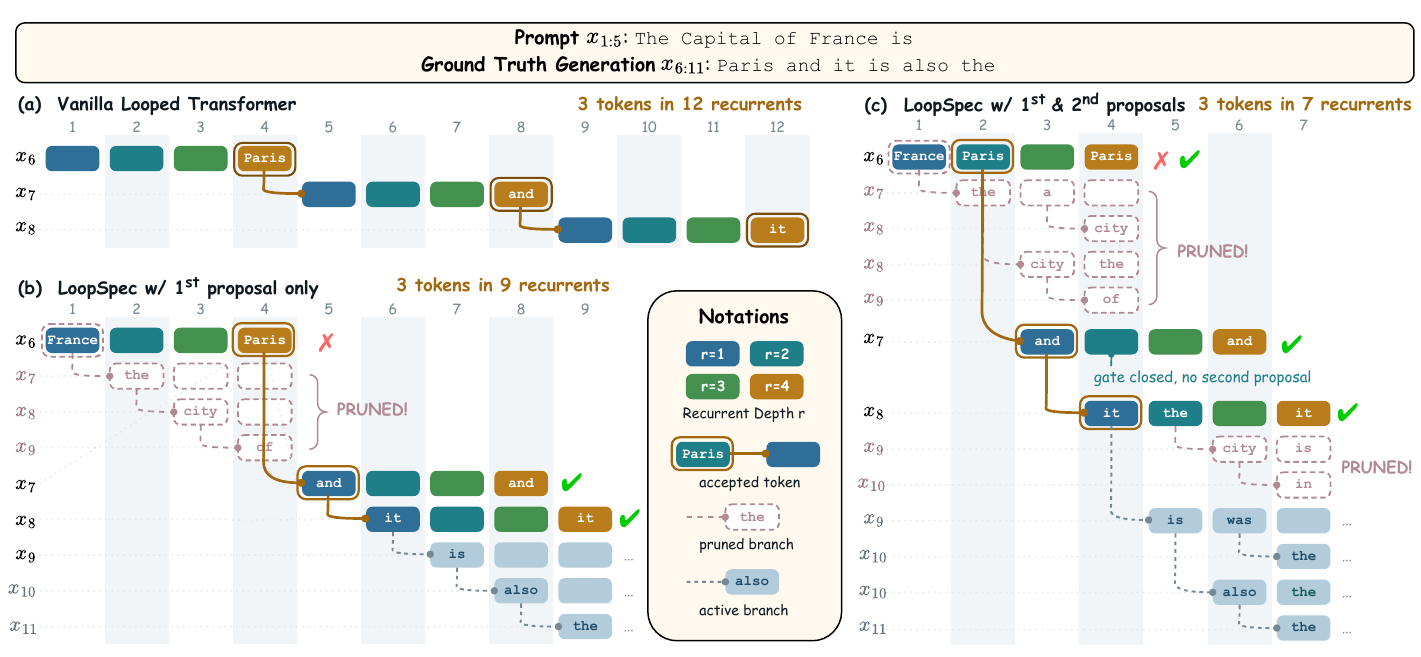}
\vspace{-1.5em}
\caption{
\ourloop{} pipelines decoding over $R=4$ recurrent depth.
(a)~A vanilla Looped Transformer finishes all $R$ recurrences of one token before starting the next.
(b)~\ourloop{} with a first proposal only at $d_1=1$: an early-depth draft starts the next token immediately, but a rejection stalls the pipeline (\cref{sec:method-pipelining}).
(c)~\ourloop{} with first proposal at $d_1=1$ and a residual second proposal with gating mechanism at $d_2=2$: the second proposal creates a fallback branch, causing fewer stalls in the pipeline (\cref{sec:method-second-proposal}).
%\sehoon{can we put forward references to (b) and (c) to the associated subsections?}
%\sehoon{the term `gating' is too out-of-the-blue here. We should at least briefly explain what it is. e.g. LOOPSPEC with first proposal at d1 = 1 and a residual second proposal at d2 = 2. Gating mechanism in x7 is triggered when xxx to avoid yyy.}
% \sehoon{mention that the proposal depth d1 = 1 for (b) and d1=1, d2=2 for (c)}
}
\label{fig:pipeline}
\end{figure}

To this end, we propose {\ourloop{}}, a training-free self-speculative pipelining framework specifically designed for Looped Transformers. 
The key idea is to exploit the token predictions from the intermediate recurrent depths. 
Rather than waiting for all recurrent steps to finish, \ourloop{} drafts a token from an early recurrent state and immediately starts computing the subsequent token conditioned on this draft (\cref{fig:pipeline}(b)).
The original computation and this speculative continuation use the same Transformer blocks, so they can be processed together in one batch even at different recurrent depths. 
Once the original computation reaches its final depth, it verifies the early draft.
If the draft is accepted, some recurrent steps for the subsequent token have already been completed, so fewer steps are needed before it can reach the final depth.
Otherwise, we prune the speculative draft and restart from the verified prefix.

Since deeper recurrent computations progressively refine the model's prediction, \ourloop{} uses a deeper recurrent state to produce a second proposal and starts an alternative continuation before verification.
Accepting this second proposal preserves its progress when the first proposal is rejected (\cref{fig:pipeline}(c)).
The resulting computation forms a pipeline in which the target computation and multiple speculative continuations
advance in parallel, substantially increasing hardware utilization while preserving the exact target-model distribution. \cref{fig:pipeline} provides an overview of the complete \ourloop{} pipeline. 

Our primary contributions can be summarized as follows:
\begin{itemize}[leftmargin=1.5em, labelsep=0.8em, itemsep=0.1em]
    \item We propose \ourloop{}, a training-free self-speculative pipelining framework that accelerates Looped Transformers without external drafters or structural modifications. To improve draft acceptance while limiting computational overhead, \ourloop{} introduces
    (i) a \textit{residual} second proposal from a deeper recurrent state to recover from first proposal rejection, and
    (ii) a \textit{gating mechanism} that creates the fallback speculative continuation only when the deeper state disagrees with the first draft (\cref{sec:method-pipelining,sec:method-second-proposal}).
    \item 
    We prove that the restart probability depends only on the second proposal depth, not the first.
    Using the empirically observed power-law decay of the intermediate-to-target TV distance, we derive the optimal proposal depths in closed form matching the measured optima (\cref{sec:theory}).
    \item We empirically demonstrate up to \textbf{6.83$\times$} lossless speedup across seven checkpoints from two Looped Transformer families through an SGLang~\citep{zheng2024sglang} implementation with branch-wise KV management and CUDA Graph optimization, outperforming the training-based DFlash~\citep{chen2026dflash} by up to $1.7\times$ (\cref{sec:experiments} and \ref{app:sglang_impl}).
\end{itemize}

\section{Related Work}

\paragraph{Speculative Decoding.}
Speculative decoding \citep{leviathan2023fast,chen2023accelerating} is a lossless inference acceleration technique that uses a lightweight drafting mechanism to generate multiple candidate tokens, which are then verified in parallel by the target model. Drafting mechanisms typically employ either a smaller pretrained model from the same architecture family \citep{leviathan2023fast} or a specially trained external drafter. Modern state-of-the-art drafters, such as EAGLE-3 \citep{li2025eagle3} and DFlash \citep{chen2026dflash}, achieve high acceptance rates by conditioning proposals on intermediate target representations.
Self-speculative decoding instead generates draft tokens using the target model itself without a separate external drafter. 
Draft \& Verify~\citep{zhang2024draft} obtains drafts by skipping intermediate layers, and LayerSkip~\citep{elhoushi2024layerskip} utilizes early exits. 

\paragraph{Looped Transformers.}
Looped Transformers~\citep{geiping2025scaling,zhu2025scaling,mcleish2026teaching,park2026loopus,nanbeige2026nanbeige} repeatedly apply shared Transformer blocks across recurrent depths, increasing effective depth without proportionally increasing the number of parameters. While they achieve higher task accuracy than standard Transformers of comparable size, their recurrent computation incurs substantially greater memory traffic during inference, limiting decoding throughput.

Recent work reduces recurrent inference cost by modifying the model architecture or adapting the number of recurrent steps. 
LT2 \citep{deng2026lt2} substitutes full attention with linear or sparse attention variants and distills hybrid checkpoints, while Think-at-Hard \citep{fu2026think} incorporates a routing controller with depth-aware adapters to adjust recurrent passes dynamically. Rather than altering attention or adding routing heads, SPEED \citep{hooper2023speed} explores speculative pipelining across cyclically shared decoder groups. However, SPEED requires a custom training process and only supports greedy decoding.

\begin{figure}[t]
\vspace{-1.5em}
\centering
\includegraphics[width=1\linewidth]{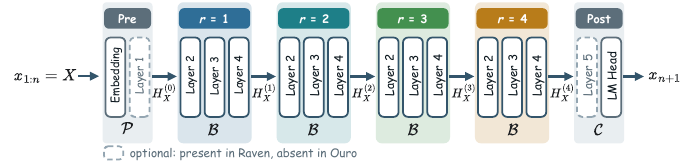}
\vspace{-1.5em}
\caption{Overview of a Looped Transformer with $R=4$ recurrent steps. In $\mathtt{Ouro}$ models~\citep{zhu2025scaling}, 
the pre-layers $\mathcal{P}$ and post-layers $\mathcal{C}$ contain only the Embedding layer and LM Head, respectively, 
% there are only the Embedding layer and LM Head in pre-layers $\mathcal{P}$ and post-layers $\mathcal{C}$, 
whereas in $\mathtt{Raven}$ models~\citep{mcleish2026teaching} 
% there are several Transformer layers in both $\mathcal{P}$ and $\mathcal{C}$
both $\mathcal{P}$ and $\mathcal{C}$ 
contain several Transformer layers.
}
\label{fig:looped-overview}
\end{figure}

\section{Preliminaries}
\label{sec:preliminaries}

\subsection{Looped Transformers}
\label{sec:prelim-looped}
For a sequence prefix $X = x_{1:n}$ of length $n$, the Looped Transformer computation can be abstracted into (1) pre-layers $\mathcal{P}$, (2) recurrent-layers $\mathcal{B}$, and (3) post-layers $\mathcal{C}$:
\begin{equation}
    \begin{aligned}
        H_{X}^{(0)} &= \mathcal{P}(X), \\
        H_{X}^{(r)} &= \mathcal{B}\!\left(H_{X}^{(r-1)}\right),
        \quad r=1,\ldots,R, \\
        p_R(\cdot\mid X) &= \mathcal{C}\!\left(H_{X}^{(R)}\right).
    \end{aligned}
    \label{eq:loop-recurrence}
\end{equation}
Here $H_{X}^{(0)}$ and $H_{X}^{(r)}$ denote hidden states before recurrence and at
recurrent depth $r$, respectively; $r$ indexes recurrent depth, and $R$ denotes
the total recurrent depth of the model. $x_{n+1} \sim p_R(\cdot\mid X)$ is the target
next-token probability distribution.
The pre-layers $\mathcal{P}$ include the token embedding, and
the post-layers $\mathcal{C}$ incorporate the LM head as well
as temperature scaling, optional top-$k$/top-$p$ filtering, and softmax
normalization to output a probability distribution over the vocabulary
$\mathcal{V}$.

More generally, applying the post-layers at any recurrent
depth $r$ yields an intermediate next-token distribution
$p_r(\cdot\mid X) = \mathcal{C}(H_{X}^{(r)})$, of which the target
distribution $p_R$ is the special case $r = R$.
Existing Looped Transformers instantiate this abstraction in different ways. $\mathtt{Ouro}$ \citep{zhu2025scaling} places all Transformer blocks inside $\mathcal{B}$, whereas $\mathtt{Raven}$ \citep{mcleish2026teaching} places only a subset of intermediate blocks in $\mathcal{B}$ and the remaining ones in $\mathcal{P}$ and $\mathcal{C}$ (\Cref{fig:looped-overview}).

\subsection{Rejection Sampling and Speculative Decoding}
\label{sec:prelim-spec}

For any two distinct probability distributions $\mu$ and $\nu$ over a
vocabulary $\mathcal{V}$, we define their element-wise positive difference as
$[\mu - \nu]_+(v) \coloneqq \max(\mu(v) - \nu(v), 0)$ for each
$v \in \mathcal{V}$, and the residual
distribution operator $(\mu \ominus \nu)$ as:
\begin{equation}
    (\mu \ominus \nu)(v) \coloneqq \frac{[\mu - \nu]_+(v)}{%
      \sum_{u\in\mathcal{V}} [\mu - \nu]_+(u)}.
    \label{eq:residual-op}
\end{equation}

The core idea of speculative decoding
\citep{leviathan2023fast,chen2023accelerating} is to draft fast and then verify via rejection sampling. We state the rule for a single token position, which is
the form \ourloop{} builds on.

Formally, let $X$ be the current prefix, and let
$p\!\left(\cdot \mid X\right)$ and $q\!\left(\cdot \mid X\right)$
denote the target and the proposal distribution of the next token,
respectively. A candidate
$\widetilde{x}$ is drawn from the
proposal distribution $q\!\left(\cdot \mid X\right)$ and accepted with probability
\begin{equation}
    \min\!\left\{1,\ \frac{p(\widetilde{x} \mid X)}{%
      q(\widetilde{x} \mid X)}\right\}.
    \label{eq:acceptance}
\end{equation}
If the candidate $\widetilde{x}$ is rejected, the next token is instead resampled from the
residual distribution:
\begin{equation}
    x \sim \Bigl(
    p\!\left(\cdot \mid X\right)
    \ominus
    q\!\left(\cdot \mid X\right)
    \Bigr).
    \label{eq:residual}
\end{equation}
\citet{leviathan2023fast} show that the token produced by
\Cref{eq:acceptance,eq:residual} is distributed exactly as a token
sampled from the target $p$, so the procedure is lossless. Crucially, 
losslessness holds for \emph{any} valid proposal distribution $q$.

\section{\ourloop{}: Pipelined Self-Speculative Decoding for Looped Transformers}
\label{sec:method}

We present \ourloop{}, a training-free self-speculative decoding framework for Looped Transformers. We first show that early recurrent states can provide effective draft proposals (\cref{sec:method-observation}). Building on this observation, we introduce a pipeline with a single proposal depth $d_1<R$, where each draft starts computation for the next token before final-depth verification (\cref{sec:method-pipelining}). 
As decoding is memory-bandwidth bound, batching does not introduce noticeable overhead.
We then extend the pipeline with a second proposal depth $d_2$, using the residual distribution and the gating mechanism to further improve the proposal quality and acceptance (\cref{sec:method-second-proposal}).

\begin{figure}[t]
\centering
\setlength{\subfigcapskip}{-5pt}
\subfigure[]{\label{fig:obs-greedy}\includegraphics[width=0.48\linewidth]{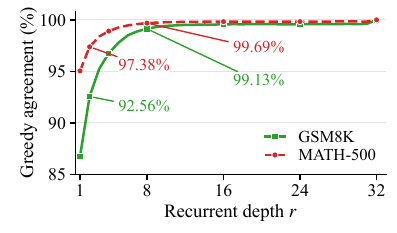}}%
\hfill
\subfigure[]{\label{fig:obs-tv}\includegraphics[width=0.48\linewidth]{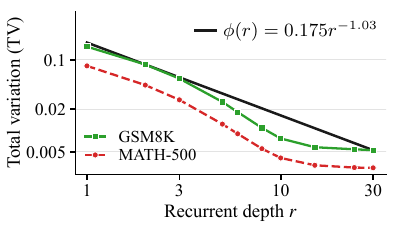}}
\setlength{\abovecaptionskip}{3pt}
\caption{Comparison between intermediate proposals at depth $r$ and the target distribution at depth $R=32$ for \ravenllama. (a)~Greedy top-1 agreement on GSM8K and MATH-500. (b)~Sampling ($T=1.0$, top-$p=0.7$) total variation (TV) distance to the target on the same benchmarks. 
TV is obtained on 10 depths over GSM8K and MATH-500, averaged per token position on top-$p$ distribution. $\phi(r)$ is the empirical power-law upper envelope for the TV distance, constructed as described in \cref{sec:method-observation}.
}
\label{fig:obs-convergence}
\end{figure}

\subsection{Observation: Early Recurrent States are Effective Drafters}
\label{sec:method-observation}

In Looped Transformers, proposals derived from intermediate recurrent states can tightly approximate the target distribution. In \cref{fig:obs-convergence}, we observe this behavior in the \ravenllama model~\citep{mcleish2026teaching}. For instance, on GSM8K \citep{cobbe2021training}, the depth-$1$ greedy agreement exceeds $86\%$ and surpasses $99\%$ by depth $8$, with a similar trend on MATH-500 \citep{hendrycks2021measuring,lewkowycz2022solving,kydlicek2025fixing,lightman2024lets}. 
We further estimate the empirical total variation (TV) distance between the intermediate readout and the target distributions under the same setting, and observe that it decreases sharply as recurrence depth $r$ increases. To characterize this, we bound the empirical TV distance with a power-law upper envelope $\phi(r)=\beta r^{-\alpha}$. Specifically, 
we choose parameters $\alpha, \beta > 0$ to minimize the maximum ratio between $\phi(r)$ and the empirical TV distance over all depths $r$. This power-law upper envelope allows us to analyze the optimal depth configuration studied in \cref{sec:theory}.

These observations motivate using early recurrent states to draft future tokens, while continuing recurrence to verify the drafts at depth $R$. Deeper intermediate states can further serve as fallback proposals when earlier predictions diverge from the target.

\subsection{Pipelined Self-Speculative Decoding with a Single Proposal}
\label{sec:method-pipelining}

We first describe the single-proposal pipeline in \Cref{fig:pipeline}(b), where each token is drafted at only one intermediate depth $d_1$ where $d_1<R$, and verified at the final depth $R$. 
Unlike vanilla decoding, which completes all $R$ recurrences before moving to the next token (\cref{fig:pipeline}(a)), this schedule overlaps computation across token positions (\cref{fig:pipeline}(b)). We call the recurrent computation for one prefix a \emph{branch}, represented as a horizontal sequence of recurrent states in the figure. A draft starts a child branch, while its parent continues toward verification. Multiple branches can remain active at different recurrent depths and crucially they are advanced by a \textit{single batched call} to the shared block.

For a branch $b$, we denote its prefix by $X^{(b)}$ and use the hidden states $H_{X^{(b)}}^{(r)}$ and readouts $p_r(\cdot\mid X^{(b)})$ from \cref{sec:prelim-looped}. We write $p_r$ when the prefix is clear. After prefilling the prompt and committing the first decoded token, \ourloop{} initializes a branch on the resulting prefix at depth $0$ and repeats the following steps:
\begin{enumerate}[leftmargin=*,label=\arabic*.]
    \item \textbf{Batched Recurrence.} All active branches advance by one recurrent depth through a single batched call to $\mathcal{B}$. In \Cref{fig:pipeline}(b), each column represents one such step across the active branches.
    \item \textbf{Early Drafting.} When branch $b$ reaches $d_1$, it reads out $q_1\coloneqq p_{d_1}=\mathcal{C}(H_{X^{(b)}}^{(d_1)})$ and samples $\widetilde{x}_b^{(1)}\sim q_1$. A child branch is then initialized at depth $0$ with the extended prefix $X^{(b)}\mathbin{\Vert}\widetilde{x}_b^{(1)}$. For example, with $d_1=1$ in \Cref{fig:pipeline}(b), the draft ``France'' at timestep 1 starts a child at timestep 2 before ``France'' is verified at timestep 4.
    \item \textbf{Verification and Continuation.} When the branch reaches $R$, it verifies $\widetilde{x}_b^{(1)}$ against $p_R$ using rejection sampling as \cref{eq:acceptance}. If accepted, the draft is committed and decoding continues from its child, which has already completed $R-d_1$ recurrent steps.
    In \Cref{fig:pipeline}(b), accepting ``and'' at timestep 8 preserves its child branch that started at timestep 6. If rejected, a replacement $x_b\sim p_R\ominus q_1$ is committed, all speculative descendants of $b$ are pruned, and a new branch starts on $X^{(b)}\mathbin{\Vert}x_b$ at depth $0$. The ``France'' draft at timestep 1 in \Cref{fig:pipeline}(b) illustrates this case: depth-$R$ commits ``Paris", which differs from the draft ``France", so we prune the descendants of ``France" and restart computation for the next token.   
\end{enumerate}

With sustained draft acceptance, tokens are committed every $d_1$ recurrent steps rather than every $R$ steps. However, each rejection discards the speculative progress and requires another $R$ recurrent steps before the next commitment. This motivates a more accurate second proposal that can provide a fallback draft when the first draft is rejected.

\subsection{Residual Second Proposal with Gating Mechanism}
\label{sec:method-second-proposal}

We now extend the single-proposal pipeline in \cref{sec:method-pipelining} by adding a second proposal depth $d_2$, with $d_1<d_2<R$. At depth $d_2$, a branch can draft an alternative token for the same position, starting a fallback child alongside the first child. 
Accepting this fallback preserves its progress instead of restarting the pipeline. We next describe how to construct the fallback proposal, when to create it, and how to verify the two candidates.

\paragraph{Residual Second Proposal.}
The second proposal is used only after the first proposal is rejected, so the required target distribution is $p_R\ominus q_1$. We therefore draft from an estimate of this residual target. When branch $b$ reaches $d_2$, it obtains a deeper readout $\widetilde{q}_2\coloneqq p_{d_2}=\mathcal{C}(H_{X^{(b)}}^{(d_2)})$ and forms
\begin{equation}
    q_2\coloneqq\widetilde{q}_2\ominus q_1
    =p_{d_2}\ominus p_{d_1},\qquad
    \widetilde{x}_b^{(2)}\sim q_2.
    \label{eq:residual-proposal}
\end{equation}
Here we use $\widetilde{q}_2$ as an estimator for $p_R$ and sample the second proposal from the residual distribution $q_2$. A second child then starts from depth $0$ with the prefix $X^{(b)}\mathbin{\Vert}\widetilde{x}_b^{(2)}$. The two proposals represent alternative branches for the same token position, as illustrated by the ``France'' and ``Paris'' drafts at timesteps 1 and 2 in the top row of \cref{fig:pipeline}(c). {Our ablation study in \cref{sec:ablation-q2-strategies} shows that the residual second proposal $q_2$ consistently improves the decoding speedup over the non-residual second proposal $\widetilde{q}_2$ across various benchmarks.}

\paragraph{Gating Mechanism.}
Creating a fallback at every branch would rapidly increase the number of active branches, since each child can itself issue two proposals. 
Specifically, if every active branch forks at both $d_1$ and $d_2$, the number of new branches $s_k$ spawned in the $k$-th interval of $d_1$ recurrent steps obeys the delayed recurrence $s_k = s_{k-1} + s_{k - d_2/d_1}$, where $s_k = 1$ for $0 \le k < d_2/d_1$\footnote{Throughout the paper, we consider that $d_2$ is divisible by $d_1$.}. For $R=32$, $d_1=2$, and $d_2=8$, this gives maximum branch count $B = \sum_{k=0}^{R/d_1 - 1} s_k =249$. To limit this growth of $B$, \ourloop{} opens the gate of second proposal and creates a second proposal branch only if
\begin{equation}
    \widetilde{q}_2(\widetilde{x}_b^{(1)})
    <q_1(\widetilde{x}_b^{(1)}).
    \label{eq:adaptive-gate}
\end{equation}
Intuitively, a fallback branch is created when the deeper recurrent state withdraws confidence from the primary candidate. Under greedy decoding, the rule reduces to $\arg\max\widetilde{q}_2\ne\widetilde{x}_b^{(1)}$. In \Cref{fig:pipeline}(c), the gate stays closed for the ``and'' branch where no second proposal is triggered at timestep 4. 
\Cref{sec:ablation-q2-strategies} evaluates the benefit of the gating mechanism. Moreover, \cref{app:gating} examines the gate opening frequency and its effect on the effective batch size.

\paragraph{Cascade Verification.}
When a branch $b$ reaches depth $R$, it produces the exact target distribution $p_R$ and a cascade verification attempts to accept the primary candidate, then the fallback candidate, and restarts the pipeline only if neither candidate is accepted.
\begin{enumerate}[leftmargin=*,label=\arabic*.]
    \item \textbf{First Proposal.} Accept $\widetilde{x}_b^{(1)}$ against $p_R$ with probability $\min\!\left\{1, p_R(\widetilde{x}_b^{(1)})/q_1(\widetilde{x}_b^{(1)})\right\}$. If accepted, commit it and keep its descendants, and prune the second child's subtree if present. In \Cref{fig:pipeline}(c), at timestep 7, accepting the first draft ``it'' from timestep 4 preserves its continuation while pruning the second draft ``the'' from timestep 5 and its subtree.

    \item \textbf{Second Proposal.} If the first proposal is rejected and a second proposal exists, verify $\widetilde{x}_b^{(2)}$ against $\rho_1\coloneqq p_R\ominus q_1$ with probability $\min\!\left\{1, \rho_1(\widetilde{x}_b^{(2)})/q_2(\widetilde{x}_b^{(2)})\right\}$. If accepted, commit it and keep its descendants, and prune the first child's subtree. This child has already completed $R-d_2$ recurrent steps, so the next commitment requires only $d_2$ more steps instead of $R$. This is the rejection of ``France'' and acceptance of the second draft ``Paris'' in the top 5 rows of \cref{fig:pipeline}(c).

    \item \textbf{Residual Resampling.} If both proposals are rejected, sample $x_b\sim\rho_1\ominus q_2$. If the first is rejected and the gate was closed, sample $x_b\sim\rho_1$ instead. Commit $x_b$, prune the speculative descendants of $b$, and start a fresh branch on $X^{(b)}\mathbin{\Vert}x_b$ at depth $0$.
\end{enumerate}
Under greedy decoding, this cascade checks whether $\widetilde{x}_{b}^{(1)}$ matches $\arg\max p_R$, falling back to $\widetilde{x}_{b}^{(2)}$ if available, and committing $\arg\max p_R$ otherwise.
The algorithm is provided in \cref{app:algorithm}.

\section{Theoretical Analysis} \label{sec:theory}

In this section, we provide theoretical analysis of \ourloop{} from two perspectives. First, we show that the restart probability depends only on the approximation error of the second proposal and is independent of that of the first proposal. 
Second, we use the power-law decay of approximation error across recurrent depths, as observed in \cref{fig:obs-tv}, to derive the optimal proposal depths that minimize the expected number of recurrent steps per generated token.
The optimal proposal depths derived from our analysis closely match the empirically optimal depth configurations.

\begin{theorem}[Rejection Rate of \ourloop{}]
\label{thm:flushing}
Fix proposal depths $1\leq d_1 < d_2 < R$ and 
% let $\varepsilon_2 := \tv{p_{d_2}, p_R}$. 
let $\varepsilon_2$ be the total variation distance between $p_{d_2}$ and $p_R$.
Under the residual second proposal $q_2 = p_{d_2} \ominus p_{d_1}$ and cascade verification, the probability that both candidates are rejected at a given position is at most $\varepsilon_2$, independently of $d_1$. Moreover, with the gating mechanism, the bound is at most $2 \varepsilon_2$.
\end{theorem}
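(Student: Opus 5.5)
The plan is to compute the probability that both candidates are rejected exactly, as a sum over the vocabulary, and then bound it by $\varepsilon_2$ using only monotonicity of $[\cdot]_+$ and the fact that $q_1$, $p_{d_2}$, $p_R$ are probability distributions. Write $p\coloneqq p_R$, $\widetilde q_2\coloneqq p_{d_2}$, and
\[
a\coloneqq[p-q_1]_+,\qquad b\coloneqq[\widetilde q_2-q_1]_+ .
\]
Their masses are $\delta_1\coloneqq\sum_v a(v)=\tv{p,q_1}$ and $\widetilde\delta_1\coloneqq\sum_v b(v)=\tv{\widetilde q_2,q_1}$. With this notation $\rho_1=a/\delta_1$ and $q_2=b/\widetilde\delta_1$.

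\textbf{Exact formula without gating.} By the standard analysis of \citet{leviathan2023fast}, the first proposal is rejected with probability $\delta_1$. Conditional on that rejection, the second candidate is a fresh draw from $q_2$ tested against $\rho_1$. It is therefore rejected with probability $\tv{\rho_1,q_2}$, and this does not depend on which $\widetilde x^{(1)}_b$ was drawn. Setting $c\coloneqq\delta_1/\widetilde\delta_1$, the both-rejected probability is
\[
\delta_1\,\tv{\rho_1,q_2}=\sum_v\bigl[a(v)-c\,b(v)\bigr]_+ .
\]
The measures $a$ and $cb$ have equal mass $\delta_1$, so this also equals $\sum_v[c\,b(v)-a(v)]_+$.

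\textbf{Bounding the sum.} I split on the sign of $c-1$ and use whichever of the two expressions is convenient.
\begin{itemize}
\item If $c\ge1$, then $cb\ge b$, so the sum is at most $\sum_v[a-b]_+$.
\item If $c<1$, I use the other expression: $cb\le b$, so the sum is at most $\sum_v[b-a]_+$.
\end{itemize}
In both cases the bound closes with the pointwise inequality
\[
\bigl|[x-q_1]_+-[y-q_1]_+\bigr|\le |x-y| ,
\]
with the sign preserved, since $t\mapsto[t-q_1(v)]_+$ is monotone and $1$-Lipschitz. This gives $[a-b]_+\le[p-\widetilde q_2]_+$ and $[b-a]_+\le[\widetilde q_2-p]_+$. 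Each of these sums to exactly $\varepsilon_2$. Since $q_1=p_{d_1}$ never appears in the final bound, the result is independent of $d_1$. The step I expect to need the most care is this case split. A direct triangle inequality on $\|a-cb\|_1$ loses a factor, giving only $\tfrac32\varepsilon_2$. The fix is to use the equal-mass identity to pick the direction of $[\cdot]_+$ in which $cb$ can be replaced by $b$.

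\textbf{With gating.} The both-rejected event now splits into two disjoint parts.
\begin{itemize}
\item \emph{Gate closed and first proposal rejected.} Here there is no fallback, so this is a restart. The probability of rejecting with $\widetilde x^{(1)}_b=v$ is $[q_1(v)-p(v)]_+$. On the closed set $\{v:\widetilde q_2(v)\ge q_1(v)\}$ we have $q_1(v)-p(v)\le\widetilde q_2(v)-p(v)$. Summing over the closed set therefore gives at most $\sum_v[\widetilde q_2-p]_+=\varepsilon_2$.
\item \emph{Gate open and both proposals rejected.} Conditional on the first rejection, the residual test of $\widetilde x^{(2)}_b$ against $\rho_1$ is independent of the value of $\widetilde x^{(1)}_b$, hence of the gate. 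So this part is at most $\Pr(\text{first rejected})\cdot\tv{\rho_1,q_2}=\delta_1\tv{\rho_1,q_2}\le\varepsilon_2$ by the ungated bound.
\end{itemize}
Adding the two parts gives the bound $2\varepsilon_2$.
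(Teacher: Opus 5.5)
Your proof is correct and is essentially the paper's argument. Your split on $c=\delta_1/\widetilde\delta_1\ge 1$ versus $c<1$ is exactly the paper's split on $\delta\le\varepsilon_1$ versus $\delta>\varepsilon_1$ in its lemma bounding $\tv{p_{d_2}\ominus p_{d_1},\,p_R\ominus p_{d_1}}\le\varepsilon_2/\varepsilon_1$. It uses the same two equal-mass forms of TV and the same inequality $[x]_+-[y]_+\le[x-y]_+$, and your gate-open/gate-closed decomposition matches the paper's.

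The one thing the paper does that you skip is the degenerate cases. These are $\delta_1=0$, where the first candidate is always accepted, and $p_{d_1}=p_{d_2}$, where $q_2$ is undefined, the gate never opens, and the rejection probability is $\varepsilon_1=\varepsilon_2$. Each needs only one line.
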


Proofs of all theorems are provided in \cref{app:proofs}.
\Cref{thm:flushing} implies that the residual proposal $p_{d_2}\ominus p_{d_1}$ achieves the same rejection bound $\varepsilon_2$ as a proposal drawn directly from $p_{d_2}$, while avoiding any additional dependence on $d_1$.
Thus, introducing the residual proposal does not worsen the rejection bound compared with using \(p_{d_2}\) directly.
With the gating mechanism described in \cref{sec:method-second-proposal}, the bound increases only by a constant factor, from $\varepsilon_2$ to $2\varepsilon_2$.

This result allows us to characterize the cost of a rejection using only the approximation quality at the second proposal depth $d_2$. Hence, we can focus on how the approximation error decreases with recurrent depth to derive the proposal depths that minimize the expected decoding cost. Motivated by the observations in \cref{fig:obs-tv}, we assume a non-increasing power-law envelope $\phi(r) = \beta r^{-\alpha}$ that upper bounds the expected total variation distance between the intermediate readout distribution $p_r$ and the target distribution $p_R$.

\begin{theorem}[Proposal Depth Selection]
\label{thm:depths-main}
Let $X$ be a random prefix drawn from a fixed distribution over prefixes.
Assume that there exist constants $\alpha, \beta > 0$ such that
\begin{align}\label{eq:phi}
\mathbb{E}_{X}\left[ \tv{p_r(\cdot \mid X), p_R(\cdot \mid X)}\right] \leq \beta r^{-\alpha}.
\end{align}
for every $r = 1,...,R$ and the total depth $R > \frac{1}{2\alpha}\max\{(\alpha\beta)^{1/\alpha},(\alpha\beta)^{-\alpha-1},(\alpha\beta)^{1/\alpha}(2\alpha)^{\frac{\alpha^2 + \alpha+1}{\alpha^2}}\}$.
Then, the optimal proposal depths to minimize expected number of recurrent steps to commit a single token are
\begin{align}
    d_1^\star = (\alpha\beta)^{\frac{\alpha+1}{\alpha^2+\alpha+1}} (2\alpha R)^{\frac{1}{\alpha^2+\alpha+1}}, \qquad d_2^\star = (\alpha \beta)^{\frac{\alpha}{\alpha^2 + \alpha + 1}}(2\alpha R)^{\frac{\alpha +1}{\alpha^2 + \alpha +1}}. \label{eq:optimal_depths}
\end{align}
\end{theorem}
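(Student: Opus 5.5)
The plan is to set up an upper-bound cost model for the expected number of recurrent steps per committed token, express it through the TV envelope, and minimize it in closed form over continuous $d_1,d_2$. Look at the cascade in \cref{sec:method-second-proposal} from the viewpoint of a branch that has just become the verifying branch. There are three outcomes.
\begin{itemize}[leftmargin=1.5em]
\item If the first proposal is accepted, the child has already done $R-d_1$ steps, so the next commitment costs $d_1$ steps.
\item If the first is rejected but the second is accepted, the cost is $d_2$ steps.
\item If both are rejected, the pipeline restarts and the cost is $R$ steps.
\end{itemize}
Bounding each cost by the larger of the relevant cases gives the conditional bound
\begin{align*}
\mathbb{E}[\text{cost}\mid X] \le d_1 + \Pr[\text{first rejected}\mid X]\, d_2 + \Pr[\text{both rejected}\mid X]\, R .
\end{align*}
The paper's cost model may charge the middle term as $(d_2-d_1)$ rather than $d_2$. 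That would change only lower-order terms; I will adopt the form above, since it is the one that reproduces \cref{eq:optimal_depths}.

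Next I bound the two probabilities. The standard speculative-decoding fact (\cref{sec:prelim-spec}, \citet{leviathan2023fast}) gives $\Pr[\text{first rejected}\mid X]=\tv{p_{d_1},p_R}$. With the gating mechanism, \cref{thm:flushing} gives $\Pr[\text{both rejected}\mid X]\le 2\,\tv{p_{d_2},p_R}$. Taking expectation over $X$ and applying the assumption \eqref{eq:phi} at $r=d_1$ and $r=d_2$ yields the surrogate objective
\begin{align*}
f(d_1,d_2) = d_1 + \beta d_1^{-\alpha} d_2 + 2\beta d_2^{-\alpha} R ,
\end{align*}
which we minimize over $1\le d_1<d_2<R$, treating $d_1,d_2$ as continuous.

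The first-order conditions are as follows.
\begin{itemize}[leftmargin=1.5em]
\item $\partial_{d_1} f=0$ gives $d_2 = d_1^{\alpha+1}/(\alpha\beta)$.
\item $\partial_{d_2} f=0$ gives $d_2^{\alpha+1} = 2\alpha R\, d_1^{\alpha}$.
\end{itemize}
Substituting the first into the second gives $d_1^{\alpha^2+\alpha+1} = (\alpha\beta)^{\alpha+1}(2\alpha R)$. This yields $d_1^\star$, and then $d_2^\star = (d_1^\star)^{\alpha+1}/(\alpha\beta)$ simplifies to the stated expression; checking the exponents is routine.

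The main obstacle is showing that this stationary point is the global minimizer on the feasible region and that it lies inside that region. For feasibility I will verify $d_1^\star\ge 1$, $d_1^\star<d_2^\star$ and $d_2^\star<R$ by writing each inequality as a power of $2\alpha R$ against a power of $\alpha\beta$. Each reduces to $2\alpha R$ exceeding one of the three quantities in the $\max$ in the hypothesis on $R$:
\begin{itemize}[leftmargin=1.5em]
\item $d_2^\star<R$ should give the $(\alpha\beta)^{1/\alpha}(2\alpha)^{(\alpha^2+\alpha+1)/\alpha^2}$ term;
\item $d_1^\star<d_2^\star$ should give the $(\alpha\beta)^{1/\alpha}$ term;
\item $d_1^\star\ge1$ should give the $(\alpha\beta)^{-\alpha-1}$ term.
\end{itemize}
For optimality, I would first try to show that $f$ is coercive toward the boundary: $f\to\infty$ as $d_1\to0$, and the $d_1$ and $R$ terms dominate elsewhere. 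Then I would check that the Hessian at the critical point is positive definite, using the stationarity relations to simplify its entries. Alternatively, for fixed $d_2$ the function $f$ is strictly convex in $d_1$, so I can profile out $d_1$ via $d_1=(\alpha\beta d_2)^{1/(\alpha+1)}$ and show the resulting one-variable function of $d_2$ has a unique interior critical point. I expect this profiling route to be the cleanest.
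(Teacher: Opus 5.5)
Your proposal is correct and follows the paper's proof essentially step for step. It uses the same per-token cost $C\in\{d_1,d_2,R\}$, bounded via $\Pr[\mathrm{accept}_1]\le 1$, $\Pr[\mathrm{accept}_2]\le\varepsilon_1$ and the $2\varepsilon_2$ restart bound from \cref{thm:flushing}, the same surrogate $f(d_1,d_2)=d_1+\beta d_1^{-\alpha}d_2+2\beta d_2^{-\alpha}R$, the same stationarity equations, and the same feasibility conditions, which you correctly match to the three terms in the $\max$.

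The only difference is the global-optimality step. The paper checks $\det H>0$ at the critical point and appeals to uniqueness of the stationary point. Your profiling or coercivity argument reaches the same conclusion and is the more rigorous way to close that step.
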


As shown in \cref{fig:obs-tv}, we obtain an empirical upper bound with $\beta=0.175$ and $\alpha=1.03$ for \ravenllama model with $R=32$. Substituting these values into \cref{eq:optimal_depths} gives us $({d_1^\star},{d_2^\star}) =(1.26, 8.85)$. 
In \cref{sec:experiments}, we explore various $(d_1, d_2)$ configurations and find that $(d_1,d_2)=(2,8)$ performs best for the same model. This demonstrates that our theoretical analysis closely captures the empirically optimal depth configuration.

\section{Experiments}
\label{sec:experiments}

\paragraph{Models and Evaluations.}
We evaluate \ourloop{} across two representative Looped Transformer families:
the $\mathtt{Ouro}$ family with $R=4$~\citep{zhu2025scaling} and the $\mathtt{Raven}$ family with $R=32$~\citep{mcleish2026teaching}.
Evaluations cover mathematical reasoning, general reasoning, and code
generation: GSM8K \citep{cobbe2021training}, MATH-500
\citep{hendrycks2021measuring,lewkowycz2022solving,kydlicek2025fixing,lightman2024lets}, BBH
\citep{suzgun2023challenging}, HumanEval+ and MBPP+
\citep{chen2021evaluating,austin2021program,liu2023evalplus} for base models,
alongside GSM8K-CoT, MATH-500, AIME 2024, and AIME 2025
\citep{maa2024aime,maa2025aime} for reasoning models (i.e., with thinking mode enabled).
Base models are evaluated under greedy decoding ($T=0.0$), whereas reasoning
models are evaluated using chat templates with thinking modes enabled, under
both greedy decoding ($T=0.0$) and sampling
($T=1.0, \text{top-}p=0.7$).
All evaluations are conducted in a single-batch setting integrated with
$\mathtt{lm\text{-}evaluation\text{-}harness}$ \citep{gao2024lmharness}.
Detailed checkpoint identifiers and benchmark configurations are provided in \cref{app:exp-details}.

\paragraph{Implementation and Metrics.}
We implement \ourloop{} by extending the SGLang serving engine
\citep{zheng2024sglang}.
All experiments are executed on an NVIDIA RTX PRO 6000 Blackwell GPU in BF16
precision using the Triton attention backend and the TinyGEMM backend provided by FlashInfer \citep{ye2025flashinfer}. Further SGLang implementation details are in \cref{app:sglang_impl}.
We report speedup on {wall-clock time} and {mean accepted length} $\gamma$.
Wall-clock time speedup is measured over the decoding phase, after the completion of prompt prefilling until the end of generation.
We define $\gamma$ as the average number of generated tokens produced per full model execution of $R$ recurrent steps:
\begin{equation}
\gamma := \frac{N_{\text{decode}} \cdot R}{%
  N_{\text{accept}, 1} \cdot d_1 + N_{\text{accept}, 2} \cdot d_2
  + (N_{\text{decode}} - N_{\text{accept}, 1} - N_{\text{accept}, 2}) \cdot R},
\end{equation}
where $N_{\text{decode}}$ is the number of generated tokens, and
$N_{\text{accept}, 1}$ and $N_{\text{accept}, 2}$ denote the number of tokens
accepted from the first and second proposals, respectively.
By construction, $\gamma \le R / d_1$, with the upper bound attained when all
tokens are accepted at the first proposal depth $d_1$.

\begin{table}[t]
\centering
\caption{\ourloop{} decoding speedup over the standard autoregressive decoding and mean accepted
length $\gamma$ on general reasoning, math, and code benchmarks across different
proposal depths. Bold values mark per-model, per-benchmark maxima,
with speedup and $\gamma$ selected independently.}
\label{tab:base-models}
\vspace{0.04in}
\fontsize{9}{11}\selectfont
\setlength{\tabcolsep}{2.2pt}
\begin{tabular*}{\linewidth}{@{\hspace{\tabcolsep}\extracolsep{\fill}}llcccccccccc@{\hspace{\tabcolsep}}}
\toprule
\multirow{2}{*}{\textbf{Models}} & \multirow{2}{*}{\makecell{\textbf{Proposal} \\ \textbf{Depths}}}
& \multicolumn{2}{c}{\textbf{GSM8K}} & \multicolumn{2}{c}{\textbf{MATH-500}}
& \multicolumn{2}{c}{\textbf{BBH}} & \multicolumn{2}{c}{\textbf{HumanEval+}}
& \multicolumn{2}{c}{\textbf{MBPP+}} \\
\cmidrule(lr){3-4} \cmidrule(lr){5-6} \cmidrule(lr){7-8} \cmidrule(lr){9-10}
\cmidrule(lr){11-12}
& & Speedup & $\gamma$ & Speedup & $\gamma$ & Speedup & $\gamma$ & Speedup & $\gamma$
& Speedup & $\gamma$ \\
\midrule
\multicolumn{12}{c}{\textbf{Greedy Setting: Temperature=0.0}} \\
\midrule
\multirow{2}{*}{\ourosmall}
  & 1   & 2.64$\times$ & 3.13 & 2.64$\times$ & 3.25 & 2.66$\times$ & 3.30
              & \textbf{3.17}$\times$ & 3.73 & 2.79$\times$ & 3.36 \\
  & 1, 2 & \textbf{2.85}$\times$ & \textbf{3.50}
              & \textbf{2.78}$\times$ & \textbf{3.55}
              & \textbf{2.79}$\times$ & \textbf{3.59}
              & 3.16$\times$ & \textbf{3.86}
              & \textbf{2.92}$\times$ & \textbf{3.64} \\
\midrule
\multirow{2}{*}{\ourolarge}
  & 1   & 3.03$\times$ & 3.43 & 2.93$\times$ & 3.50 & 2.89$\times$ & 3.49
              & \textbf{3.36}$\times$ & 3.78 & 3.00$\times$ & 3.44 \\
  & 1, 2 & \textbf{3.12}$\times$ & \textbf{3.68}
              & \textbf{2.99}$\times$ & \textbf{3.72}
              & \textbf{2.98}$\times$ & \textbf{3.72}
              & 3.33$\times$ & \textbf{3.89}
              & \textbf{3.09}$\times$ & \textbf{3.69} \\
\midrule
\multirow{5}{*}{\makecell[l]{$\mathtt{Raven}\text{-}$\\$\mathtt{Llama}\text{-}\mathtt{3.2}$}}
  & 2   & 3.04$\times$ & 7.62 & 4.77$\times$ & 11.56 & 4.19$\times$ & 10.35
              & 4.93$\times$ & 11.73 & 3.96$\times$ & 9.70 \\
  & 4   & 4.06$\times$ & 6.55 & 4.76$\times$ & 7.43 & 4.57$\times$ & 7.21
              & 4.89$\times$ & 7.52 & 4.55$\times$ & 7.14 \\
  & 4, 8 & 4.33$\times$ & 7.51 & 4.90$\times$ & 7.83 & 4.76$\times$ & 7.74
              & 5.01$\times$ & 7.84 & 4.72$\times$ & 7.70 \\
  & 2, 10 & 4.38$\times$ & 12.02 & 5.70$\times$ & 14.40
               & 5.32$\times$ & 13.72 & 5.83$\times$ & 14.36 & 5.07$\times$ & 13.08 \\
  & 2, 8 & \textbf{4.49}$\times$ & \textbf{12.50}
              & \textbf{5.82}$\times$ & \textbf{14.60}
              & \textbf{5.39}$\times$ & \textbf{13.98}
              & \textbf{5.88}$\times$ & \textbf{14.55}
              & \textbf{5.14}$\times$ & \textbf{13.44} \\
\midrule
\multirow{5}{*}{\makecell[l]{$\mathtt{Raven}$-\\$\mathtt{OLMo}\text{-}\mathtt{2}\text{-}\mathtt{0425}$}}
  & 2   & 3.75$\times$ & 8.95 & 3.16$\times$ & 8.34 & 3.52$\times$ & 9.24
              & 5.05$\times$ & 12.12 & 5.04$\times$ & 12.06 \\
  & 4   & 4.38$\times$ & 6.97 & 4.04$\times$ & 6.94 & 4.17$\times$ & 7.07
              & 4.78$\times$ & 7.59 & 4.81$\times$ & 7.62 \\
  & 4, 8 & 4.60$\times$ & 7.67 & 4.26$\times$ & 7.67 & 4.38$\times$ & 7.73
              & 4.86$\times$ & 7.87 & 4.89$\times$ & 7.88 \\
  & 2, 10 & 5.06$\times$ & 12.87 & \textbf{4.29}$\times$ & 12.65
               & 4.57$\times$ & 13.17 & 5.75$\times$ & 14.37 & 5.86$\times$ & 14.56 \\
  & 2, 8 & \textbf{5.20}$\times$ & \textbf{13.35}
              & \textbf{4.29}$\times$ & \textbf{13.11}
              & \textbf{4.66}$\times$ & \textbf{13.60}
              & \textbf{5.86}$\times$ & \textbf{14.74}
              & \textbf{5.97}$\times$ & \textbf{14.86} \\
\midrule
\multirow{5}{*}{\makecell[l]{$\mathtt{Raven}\text{-}$\\$\mathtt{TinyLlama}\text{-}\mathtt{3T}$}}
  & 2   & 3.86$\times$ & 8.18 & \multicolumn{4}{c}{\multirow{5}{*}{\makecell{prompts exceed \\ max context length}}}
              & 5.85$\times$ & 11.94 & 4.60$\times$ & 9.69 \\
  & 4   & 4.69$\times$ & 6.72 & \multicolumn{4}{c}{}
              & 5.40$\times$ & 7.46 & 4.98$\times$ & 7.08 \\
  & 4, 8 & 5.01$\times$ & 7.58 & \multicolumn{4}{c}{}
              & 5.58$\times$ & 7.85 & 5.18$\times$ & 7.71 \\
  & 2, 10 & 5.38$\times$ & 12.43 & \multicolumn{4}{c}{}
               & \textbf{6.83}$\times$ & 14.49
               & \textbf{5.96}$\times$ & 13.37 \\
  & 2, 8 & \textbf{5.53}$\times$ & \textbf{12.89} & \multicolumn{4}{c}{}
              & 6.81$\times$ & \textbf{14.70}
              & 5.76$\times$ & \textbf{13.55} \\
\bottomrule
\end{tabular*}%

\end{table}

\subsection{Base Models}
\cref{tab:base-models} presents the decoding speedup and mean accepted length $\gamma$ across base model checkpoints.
Across various tasks, \ourloop{} provides consistent speedups over standard
autoregressive decoding.
For the $\mathtt{Ouro}$ family ($R=4$), \ourloop{} achieves speedups up to $3.36\times$,
with $\gamma$ exceeding $3.5$ out of $R/d_1=4$.
For the deeper $\mathtt{Raven}$ family ($R=32$), \ourloop{} yields peak speedups ranging
from $5.88\times$ to $6.83\times$ across models, with $\gamma$ exceeding $14.5$. 
Here, while a single proposal at depth $d_1=2$ suffers from lower acceptance rates, adding a second proposal (e.g., $d_1=2, d_2=8$) compensates for early errors, allowing the first proposal depth to be pushed down to $d_1=2$ to enlarge the pipeline depth and maximize speedup. In \cref{app:depth-accept}, we further show that the first proposal accounts for over $86\%$ of committed tokens, while the second proposal recovers the majority of the remainder, leaving at most $2.95\%$ of all committed tokens to full pipeline restart.

Notably, $\gamma$ measures the reduction in serial recurrent steps but excludes
the cost of $\mathcal{P}$ and $\mathcal{C}$ computation.
A smaller $d_1$ triggers more frequent drafting and branch initialization,
increasing these costs, particularly in Raven models with heavy pre- and post-layers.
This overhead can outweigh the recurrent step savings, explaining why some configurations on $\mathtt{Raven}$ models achieve higher $\gamma$ values but lower wall-clock time speedup. 

\subsection{Reasoning Models}
\begin{table}[t]
\centering
\caption{\ourloop{} decoding speedup over the standard autoregressive decoding and mean accepted
length $\gamma$ with thinking enabled across different proposal depths. 
Results under temperature $0.0$ and $1.0$ (with top-$p=0.7$) are reported
separately. Bold values mark per-model, per-benchmark maxima, with speedup and $\gamma$ selected independently.}
\vspace{0.04in}
\label{tab:reasoning-models}
\fontsize{9}{11}\selectfont
\setlength{\tabcolsep}{5pt}
\begin{tabular*}{\linewidth}{@{\hspace{\tabcolsep}\extracolsep{\fill}}llcccccccc@{\hspace{\tabcolsep}}}
\toprule
\multirow{2}{*}{\textbf{Models}} & \multirow{2}{*}{\makecell{\textbf{Proposal} \\ \textbf{Depths}}}
& \multicolumn{2}{c}{\textbf{GSM8K}} & \multicolumn{2}{c}{\textbf{MATH-500}}
& \multicolumn{2}{c}{\textbf{AIME 2024}} & \multicolumn{2}{c}{\textbf{AIME 2025}} \\
\cmidrule(lr){3-4} \cmidrule(lr){5-6} \cmidrule(lr){7-8} \cmidrule(lr){9-10}
& & Speedup & $\gamma$ & Speedup & $\gamma$ & Speedup & $\gamma$ & Speedup & $\gamma$ \\
\midrule
\multicolumn{10}{c}{\textbf{Greedy Setting: Temperature=0.0}} \\
\midrule
\multirow{2}{*}{\ourosmallthinking}
  & 1   & 2.47$\times$ & 2.95 & 2.49$\times$ & 3.12
              & \textbf{2.56}$\times$ & 3.26
              & 2.45$\times$ & 3.23 \\
  & 1, 2 & \textbf{2.70}$\times$ & \textbf{3.36}
              & \textbf{2.65}$\times$ & \textbf{3.50}
              & 2.52$\times$ & \textbf{3.54}
              & \textbf{2.59}$\times$ & \textbf{3.55} \\
\midrule
\multirow{2}{*}{\ourolargethinking}
  & 1   & 2.76$\times$ & 3.21 & 2.63$\times$ & 3.33 & 2.53$\times$ & 3.31
              & 2.58$\times$ & 3.22 \\
  & 1, 2 & \textbf{2.95}$\times$ & \textbf{3.55}
              & \textbf{2.78}$\times$ & \textbf{3.64}
              & \textbf{2.58}$\times$ & \textbf{3.63}
              & \textbf{2.66}$\times$ & \textbf{3.64} \\
\midrule
\multicolumn{10}{c}{\textbf{Sampling Setting: Temperature=1.0}} \\
\midrule
\multirow{2}{*}{\ourosmallthinking}
  & 1   & 2.11$\times$ & 2.64 & 2.17$\times$ & 2.88 & 2.07$\times$ & 2.77
              & 2.05$\times$ & 2.76 \\
  & 1, 2 & \textbf{2.42}$\times$ & \textbf{3.21}
              & \textbf{2.41}$\times$ & \textbf{3.35}
              & \textbf{2.34}$\times$ & \textbf{3.31}
              & \textbf{2.27}$\times$ & \textbf{3.28} \\
\midrule
\multirow{2}{*}{\ourolargethinking}
  & 1   & 2.62$\times$ & 3.11 & 2.52$\times$ & 3.21 & 2.37$\times$ & 3.14
              & 2.39$\times$ & 3.10 \\
  & 1, 2 & \textbf{2.83}$\times$ & \textbf{3.54}
              & \textbf{2.70}$\times$ & \textbf{3.61}
              & \textbf{2.51}$\times$ & \textbf{3.54}
              & \textbf{2.65}$\times$ & \textbf{3.53} \\
\bottomrule
\end{tabular*}%

\end{table}

\cref{tab:reasoning-models} reports the performance of \ourloop{} in
reasoning models with the {thinking} mode enabled.
Across long-context chain-of-thought generation on GSM8K, MATH-500, and AIME 2024/2025 benchmarks, \ourloop{} delivers speedups up to $2.95\times$ with high $\gamma$ values even on complex, long-reasoning benchmarks.

For stochastic sampling ($T=1.0, \text{top-}p=0.7$), \ourloop{} maintains competitive speedups up to $2.83\times$. Here, the second proposal proves particularly effective: drafting from the residual distribution upon rejection increases $\gamma$ from $3.11$ to $3.54$ on \ourolargethinking{} under the GSM8K benchmark, yielding substantial speedup gains over using only the first proposal.

\subsection{Ablation Study}
\label{sec:ablations}

\paragraph{Residual Second Proposal and Gating Mechanism.}
\label{sec:ablation-q2-strategies}
% \begin{table}[t]
% \centering
% \caption{Ablation study of the residual and gated second proposal in \ourloop{}.}
% \label{tab:second-proposal-ablation}
% \small
% \input{tables/second-proposal-ablation}
% \end{table}

\begin{table}[t]
\centering
\vspace{-1em}
\caption{
% Ablation study of the residual and gated second proposal in \ourloop{}.
Ablation study of different second proposal methods in \ourloop{}.
% \textcolor{red}{Naive (i.e., non-residual) and Residual Proposal sample their second proposals from $\widetilde{q}_2$ and ${q}_2$, respectively, as defined in \cref{eq:residual-proposal}.}
%\textcolor{red}{Gated Second Proposal applies the gating method described in \cref{sec:method-second-proposal} on top of Residual Second Proposal.}
% \sehoon{Ablation study of different second-proposal methods in \ourloop{}. Non-residual and Residual Proposal sample from ${q}_2$ and $\widetilde{q}_2$, respectively, as defined in \Cref{eq:residual-proposal}. Gated Second Proposal applies the gating method described in \Cref{sec:method-second-proposal} on top of Residual Second Proposal.}
% \sehoon{I suggest that we name First and Second Proposal -> Non-residual Second Proposal instead}
}
\vspace{0.04in}
\label{tab:second-proposal-ablation}
\fontsize{9}{11}\selectfont
\setlength{\tabcolsep}{5pt}
% \begin{tabular}{lcccc}
% \toprule
% \textbf{Raven-Llama-3.2} & \textbf{GSM8K-CoT} & \textbf{MATH-500} & \textbf{HumanEval+} & \textbf{MBPP+} \\
% \midrule
% Baseline & 1.00$\times$ & 1.00$\times$ & 1.00$\times$ & 1.00$\times$ \\
% \quad + First \& Second Proposal & 3.24$\times$ & 4.49$\times$ & 4.59$\times$ & 4.07$\times$ \\
% \quad + Residual Second Proposal & 3.52$\times$ & 4.73$\times$ & 4.76$\times$ & 4.57$\times$ \\
% \quad + Gated Second Proposal & \textbf{3.74}$\times$ & \textbf{4.85}$\times$ & \textbf{5.00}$\times$ & \textbf{4.71}$\times$ \\
% \bottomrule
% \end{tabular}%

\begin{tabular*}{\linewidth}{@{\hspace{\tabcolsep}\extracolsep{\fill}}lcccc@{\hspace{\tabcolsep}}}
\toprule
\textbf{Model:} \ravenllama & \textbf{GSM8K} & \textbf{MATH-500} & \textbf{HumanEval+} & \textbf{MBPP+} \\
\midrule
Standard Autoregressive Decoding & 1.00$\times$ & 1.00$\times$ & 1.00$\times$ & 1.00$\times$ \\
\quad + First \& Non-residual Second Proposal & 3.24$\times$ & 4.49$\times$ & 4.59$\times$ & 4.07$\times$ \\
\quad + Residual Second Proposal & 3.52$\times$ & 4.73$\times$ & 4.76$\times$ & 4.57$\times$ \\
\quad + Gating Mechanism & \textbf{3.74}$\times$ & \textbf{4.85}$\times$ & \textbf{5.00}$\times$ & \textbf{4.71}$\times$ \\
\bottomrule
\end{tabular*}%

\end{table}

\cref{tab:second-proposal-ablation} presents the results of an ablation
study on the second proposal in \ourloop{}, which are the residual second proposal and the gating mechanism. The ablation study is conducted on \ravenllama model under sampling ($T=1.0, \text{top-}p=0.7$) on GSM8K, MATH-500, HumanEval+, and MBPP+ benchmarks. 
We mask out the first proposal token $\widetilde{x}_b^{(1)}$ from $\widetilde{q}_2$ for the non-residual second proposal and from $q_2$ for the residual second proposal, and renormalize the corresponding distribution before sampling. This ensures that the second proposal differs from the first proposal. With gating, this masking can be omitted for the residual second proposal because the gate opens only when $\widetilde{q}_2(\widetilde{x}_b^{(1)}) < q_1(\widetilde{x}_b^{(1)})$, in which case $q_2(\widetilde{x}_b^{(1)})$ is already zero.

Adding the first and non-residual second proposal at $d_1=2, d_2=8$ yields speedups ranging from $3.24\times$ to $4.59\times$. Replacing the non-residual second proposal with the residual second proposal and adding the gating mechanism consistently improve the speedups to $3.74\times$ to $5.00\times$, showing the effectiveness of the residual second proposal and gating mechanism.
\cref{app:gating} further examines how often the gate opens and how gating reduces the effective batch size during decoding.

\paragraph{Comparison between \ourloop{} and DFlash.}
\label{sec:dflash-comparison}
\begin{figure}[t]
\centering
\includegraphics[width=\linewidth]{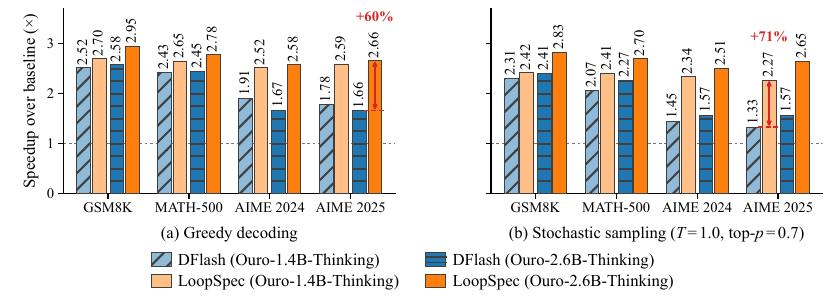}
\setlength{\abovecaptionskip}{3pt}
\vspace{-1.5em}
\caption{Decoding speedup over the 
%\textcolor{red}{autoregressive baseline} 
standard autoregressive decoding for \ourloop{} versus DFlash on the $\mathtt{Ouro}$ thinking models, under greedy decoding and stochastic sampling.}
\label{fig:dflash-comparison}
\end{figure}

We additionally compare our training-free \ourloop{} to DFlash~\citep{chen2026dflash} on various reasoning tasks, where DFlash is an effective training-based speculative decoding method that has been largely adopted in standard autoregressive models, e.g., \cite{mimo2026v25pro_fp4dflash}. 
As shown in \Cref{fig:dflash-comparison}, \ourloop{} consistently outperforms DFlash by showing up to $1.7\times$ speedup on decoding time under both greedy and stochastic sampling settings. More details on implementation and settings are provided in \cref{sec:dflash-detail}.

\section{Conclusion}
We introduced \ourloop{}, a training-free self-speculative decoding framework for Looped Transformers that converts recurrent depth into token-position parallelism. An early recurrent state drafts the next token and immediately begins computing it while the current token continues toward final-depth verification. Furthermore, we proposed a residual second proposal with a gating mechanism from a deeper recurrent depth alongside a cascade verification that keeps decoding lossless.
Across the $\mathtt{Ouro}$ ($R=4$) and $\mathtt{Raven}$ ($R=32$) families, \ourloop{} achieves speedups of up to $6.83\times$ on general reasoning, math, and coding benchmarks.

% ICLR 2027 requires an AI use statement in the submitted paper. Replace this
% comment with a truthful disclosure before submission; it does not count toward
% the page limit. See the ICLR 2027 AI Policy for Authors.
% \subsection*{AI Use Statement}
% In this work, we used generative AI tools for [...]. We have reviewed all
% AI-assisted work and take responsibility for the final content.

% Optional sections that do not count toward the page limit can be placed here:
% \subsection*{Ethics Statement}
% \subsection*{Reproducibility Statement}
% \subsection*{Acknowledgments}

% Uncomment these two lines after adding citations and entries to references.bib.
\bibliography{references}
\bibliographystyle{iclr2027_conference}

\appendix
\newpage

\section{Experimental Setup Details}
\label{app:exp-details}

\cref{tab:model-mapping} presents the complete mapping between evaluated
Looped Transformer checkpoints and the model names used throughout this paper.
\cref{tab:benchmark-specs} outlines the benchmark specifications,
prompting protocols, and maximum generation token budgets
across base and reasoning models.
For reasoning models, evaluations apply chat templates with thinking mode
enabled to support long chain-of-thought derivations.
For code generation on HumanEval+ and MBPP+, we employ zero-shot custom prompts that instruct models to produce self-contained Python scripts within markdown code blocks.

\begin{table}[h]
\centering
\caption{Mapping between evaluated checkpoints and the model names used in this paper.}
\label{tab:model-mapping}
\small
% \begin{tabular}{ll}
% \toprule
% \textbf{Paper Name} & \textbf{Hugging Face Checkpoint Name} \\
% \midrule
% Ouro-1.4B & \texttt{ByteDance/Ouro-1.4B} \\
% Ouro-2.6B & \texttt{ByteDance/Ouro-2.6B} \\
% Ouro-1.4B-Thinking & \texttt{ByteDance/Ouro-1.4B-Thinking} \\
% Ouro-2.6B-Thinking & \texttt{ByteDance/Ouro-2.6B-Thinking} \\
% Raven-Llama-3.2 & \texttt{smcleish/Recurrent-Llama-3.2-train-recurrence-32} \\
% Raven-OLMo-2-0425 & \texttt{smcleish/Recurrent-OLMo-2-0425-train-recurrence-32} \\
% Raven-TinyLlama-3T & \texttt{smcleish/Recurrent-TinyLlama-3T-train-recurrence-32} \\
% \bottomrule
% \end{tabular}%

\begin{tabular}{ll}
\toprule
\textbf{Name in Paper} & \textbf{Hugging Face Checkpoint} \\
\midrule
\multicolumn{2}{l}{\textit{Base Models}} \\
Ouro-1.4B & \texttt{ByteDance/Ouro-1.4B} \\
Ouro-2.6B & \texttt{ByteDance/Ouro-2.6B} \\
Raven-Llama-3.2 & \texttt{smcleish/Recurrent-Llama-3.2-train-recurrence-32} \\
Raven-OLMo-2-0425 & \texttt{smcleish/Recurrent-OLMo-2-0425-train-recurrence-32} \\
Raven-TinyLlama-3T & \texttt{smcleish/Recurrent-TinyLlama-3T-train-recurrence-32} \\
\midrule
\multicolumn{2}{l}{\textit{Reasoning Models}} \\
Ouro-1.4B-Thinking & \texttt{ByteDance/Ouro-1.4B-Thinking} \\
Ouro-2.6B-Thinking & \texttt{ByteDance/Ouro-2.6B-Thinking} \\
\bottomrule
\end{tabular}

\end{table}

\begin{table}[h]
\centering
\caption{Evaluation benchmark configurations for base and reasoning models.}
\label{tab:benchmark-specs}
\small
\begin{tabular}{llcc}
\toprule
\textbf{Benchmark} & \textbf{Prompting Protocol} & \makecell{\textbf{Base} \\ \textbf{Max Gen Toks}} & \makecell{\textbf{Reasoning} \\ \textbf{Max Gen Toks}} \\
\midrule
GSM8K      & 3-shot CoT             & 512   & 4,096  \\
MATH-500   & 4-shot                 & 2,048 & 8,192  \\
BBH        & 3-shot CoT             & 1,024 & --     \\
HumanEval+ & 0-shot (Custom Prompt) & 1,024 & --     \\
MBPP+      & 0-shot (Custom Prompt) & 1,024 & --     \\
AIME 2024  & 0-shot                 & --    & 16,384 \\
AIME 2025  & 0-shot                 & --    & 16,384 \\
\bottomrule
\end{tabular}%
\end{table}

\section{Implementation Details of Pipelined Decoding}
\label{app:sglang_impl}
We implement \ourloop{} on top of SGLang \citep{zheng2024sglang}. In this section, we cover key implementation details as below.

\paragraph{KV Cache Management.}
The biggest challenge during \ourloop{} decoding is correctly maintaining the KV states. Computation at different recurrent depths requires different KV states, even though they share model parameters. Moreover, pruning a rejected branch must preserve the prefix KV states still needed by other branches. We maintain the KV cache for each branch separately at each recurrent depth. A newly created branch reuses its parent's cached prefix by copying the KV indices rather than the KV states and stores new KV separately. After verification, we keep the selected branch and its descendants and release only KV states no longer needed by the remaining branches. For Raven, we also maintain KV caches for $\mathcal{P}$ and $\mathcal{C}$, with separate caches for $\mathcal{C}$ at each proposal and verification depth.

\paragraph{CUDA Graph Execution.}
Each decoding step involves several GPU kernels, and launching them individually adds overhead to the computation. CUDA Graphs reduce this overhead by recording a sequence of operations and replaying it with updated inputs. However, drafting and pruning change the batch size, while each recorded graph requires fixed input shapes. Since the maximum number of branches can be determined in advance, we prepare graphs for the supported batch sizes separately for $\mathcal{P}$, $\mathcal{B}$, and $\mathcal{C}$. At each step, we select the graph matching the number of branches processed by that component. Each branch's hidden state is stored between steps and loaded into the selected batch, so changing the batch size does not discard its progress. The graphs also allocate space for new KV entries and update the KV indices before model computation, avoiding separate launches for these cache-management operations.

\paragraph{Depth Configuration.}
We require $d_2$ and $R$ to be multiples of $d_1$, with $d_1 < d_2 < R$. This aligns the proposal and verification stages with the execution schedule defined by $d_1$ and synchronizes the pre-layers $\mathcal{P}$ and post-layers $\mathcal{C}$ across branches, allowing their computations to be batched.

\section{DFlash Implementation Details}
\label{sec:dflash-detail}
In \cref{sec:dflash-comparison}, we conducted an ablation study comparing \ourloop{} and DFlash. The training and inference details are as follows.

We trained 5-layer DFlash drafters with block size 16 on the $\mathtt{Ouro}$ thinking models. We used the SpecForge \citep{li2026specforge} training framework, and trained on the OpenR1-Math-220k dataset \citep{lozhkov2025openr1math220k} with regenerated answers, following the recommended recipe. We trained for 10,000 steps with a batch size of 32 on both \ourosmallthinking{} and \ourolargethinking{}. 
During inference, we use block size of 16 for DFlash, and identical hyperparameters and SGLang configurations for \ourloop{} and DFlash for fair comparison.

\section{Empirical Results on Gating Strategy and Effective Batch Size}
\label{app:gating}
Gating reduces the larger parallel computation introduced by second proposals. Each proposal spawns a child branch that continues drafting future tokens, so unconditionally issuing second proposals increases the number of active branches. We examine how often the gate opens and how gating affects the effective batch size on the two Raven models under sampling.

\paragraph{Gate Opening Frequency.}
We measure how often the gate opens at $d_2$ to trigger a second proposal across four benchmarks. As shown in \cref{tab:gate-opening}, the opening frequency ranges from $18.75\%$ to $27.10\%$ for \ravenllama and from $20.11\%$ to $40.40\%$ for \ravenolmo. These results show that gating effectively reduces additional branch creation by selectively triggering second proposals.

\begin{table}[htbp]
\centering
\caption{Gate opening frequency under sampling, measured as the percentage of gate evaluations that trigger a second proposal. }
\label{tab:gate-opening}
\small
\begin{tabular}{lrrrr}
\toprule
Model & GSM8K & MATH-500 & HumanEval+ & MBPP+ \\
\midrule
\ravenllama & $27.10\%$ & $18.75\%$ & $19.66\%$ & $20.40\%$ \\
\ravenolmo & $24.19\%$ & $40.40\%$ & $20.11\%$ & $20.49\%$ \\
\bottomrule
\end{tabular}
\end{table}

\paragraph{Effective Batch Size.}
We define effective batch size as the number of active branches processed by each recurrent forward pass and report its distribution over all recurrent forward passes during evaluation. \Cref{fig:gate-effective-batch-size} compares decoding with and without gating on GSM8K. Gating reduces the mean effective batch size from $43.75$ to $34.06$ for \ravenllama and from $40.74$ to $31.70$ for \ravenolmo. The distributions also shift toward smaller batch sizes, with observed peaks decreasing from $239$ to $176$ and from $243$ to $191$, respectively. These results show that gating  reduces both the mean and peak effective batch size during inference.

\begin{figure}[htbp]
\centering
\includegraphics[width=\linewidth]{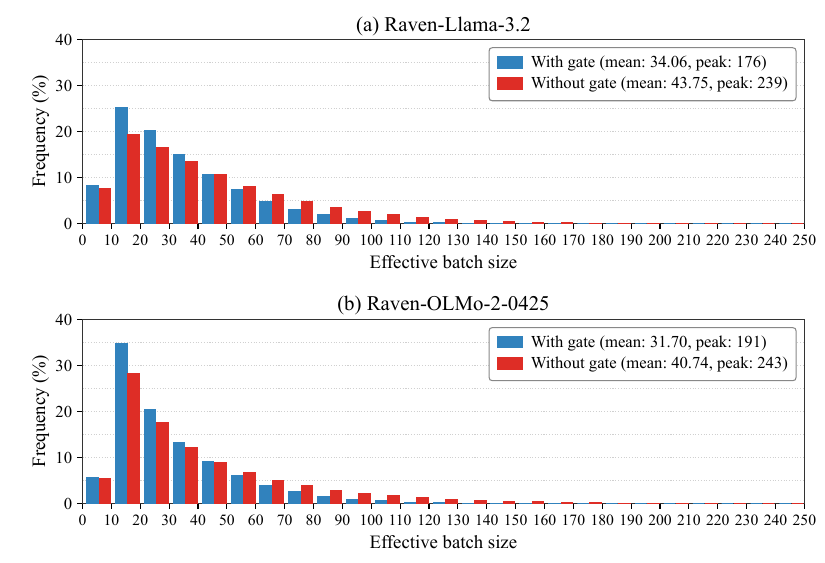}
\setlength{\abovecaptionskip}{3pt}
\vspace{-1em}
\caption{Effective batch size distributions with and without gating on GSM8K under sampling. Effective batch size is the number of active branches processed by each recurrent forward pass. Bars show the percentage of recurrent forward passes in each batch-size bin of width 10. Legends report the mean and peak batch sizes.}
\label{fig:gate-effective-batch-size}
\end{figure}

\FloatBarrier

\section{Token Commitment Sources Across Proposal Depths}
\label{app:depth-accept}
We examine how the two proposal depths contribute to token commitment in \ourloop{}. We partition committed tokens into three categories: accepted first proposals, accepted second proposals after first-proposal rejection, and full-depth fallback when neither proposal is accepted. These categories identify the source of each committed token. 
\Cref{fig:depth-accept} shows the share of tokens committed from each source for \ravenllama with $(d_1,d_2,R)=(2,8,32)$ and \ourosmall{} with $(d_1,d_2,R)=(1,2,4)$ in greedy decoding scenario for multiple benchmarks.

\paragraph{First Proposals Cover Most Tokens.}
First proposals account for $91.59\%$--$96.31\%$ of committed tokens for \ravenllama and $86.57\%$--$95.35\%$ for \ourosmall{} across the four benchmarks. Thus, the shallow proposal supplies most committed tokens in the evaluated settings, allowing the pipeline to retain computation started at the first proposal depth.

\paragraph{Second Proposals Recover Early Rejections.}
Second proposals contribute a further $3.50\%$--$7.93\%$ of tokens for \ravenllama, leaving only $0.19\%$--$0.48\%$ to full-depth fallback. For \ourosmall{}, the corresponding shares are $3.74\%$--$10.48\%$ and $0.90\%$--$2.95\%$. Across both models and all four benchmarks, second proposals therefore recover more tokens than require full-depth fallback. These results support the complementary roles of the two depths: the first proposal enables early drafting, while the second preserves speculative progress for a substantial fraction of first-proposal rejections.

\begin{figure}[htbp]
\centering
\includegraphics[width=\linewidth]{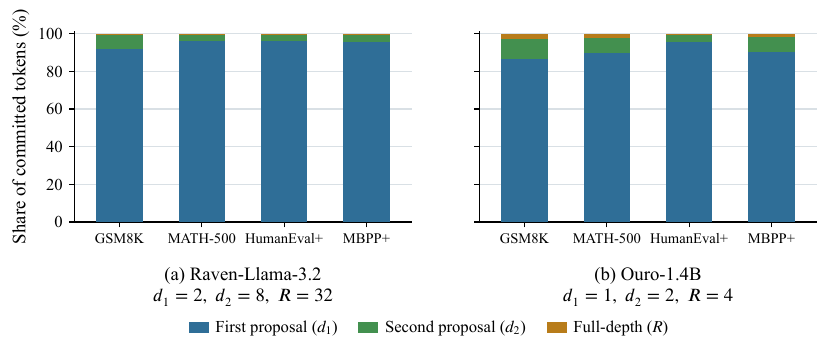}
\setlength{\abovecaptionskip}{3pt}
\vspace{-2em}
\caption{Share of committed tokens by different proposal sources.}
\label{fig:depth-accept}
\end{figure}

\FloatBarrier

\section{Empirical Power-Law Decay and Optimal Proposal Depths for Raven-OLMo-2-0425}
\label{app:olmo-power-law}

In this section, we present the \ravenolmo sampling ($T=1.0$, top-$p=0.7$) total variation (TV) distance to the target on GSM8K and MATH-500 in \cref{fig:obs-tv-olmo}.
Similar to \cref{fig:obs-tv}, we obtain the power-law envelope $\phi(r)=\beta r^{-\alpha}$, and it gives $\alpha=1.08$ and $\beta=0.207$.
With \cref{thm:depths-main}, we can compute the optimal proposal depths $(d_1^*,d_2^*) = (1.41, 9.17)$,
which again aligns with the empirical optimal depths $(d_1,d_2)=(2, 8)$ in \cref{sec:experiments}.

\begin{figure}[htbp]
\centering
\includegraphics[width=0.5\linewidth]{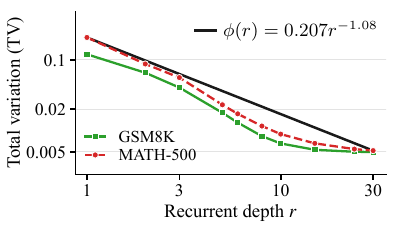}
\vspace{-1em}
\caption{Sampling ($T=1.0$, top-$p=0.7$) total variation (TV) distance between intermediate proposals at depth $r$ and the target distribution at depth $R=32$ for Raven-OLMo-2-0425 on GSM8K and MATH-500. TV is obtained on 10 depths and averaged per token position on the top-$p$ distribution. The power-law envelope $\phi(r)$ is constructed as described in \cref{sec:method-observation}.}
\label{fig:obs-tv-olmo}
\end{figure}

\FloatBarrier

\newpage
\section{Proofs}
\label{app:proofs}

We provide a formal definition of the total variation distance, which is used to evaluate the performance of speculative decoding.

\begin{definition}[Total variation distance]
    For distributions $p$ and $q$ over a finite sample space $\mathcal{X}$, the total variation distance between $p$ and $q$ is defined as 
    \begin{align*}
        \tv{p,q} := \frac12 \sum_{x \in \mathcal{X}} \abs{p(x) - q(x)} = \sum_{x \in \mathcal{X}} [p - q]_+(x) = \sum_{x \in \mathcal{X}} [q - p]_+(x).
    \end{align*}
\end{definition}

In the standard speculative decoding setting on vocabulary $\mathcal{X}$, given the target distribution $p$ and a proposal distribution $q$, the accepted probability is  
\begin{align*}
    \Pr[\mathrm{accept}] &= \sum_{x \in \mathcal{X}} q(x)\min\left(1, \frac{p(x)}{q(x)}\right)\\
    &= \sum_{x \in \mathcal{X}} \min\left( p(x), q(x)\right) \\
    &= \sum_{x \in \mathcal{X}} p(x) - [p(x) - q(x)]_{+} = 1 - \tv{p, q}.
\end{align*}
In other words, $\tv{p, q}$ measures a probability of rejection.

\subsection{Proof of \cref{thm:flushing}}

To prove \cref{thm:flushing}, we need the following lemma in advance.

\begin{lemma}
\label{lmm:tv-upper-bound}
Let $\nu_1 := p_{d_2} \ominus p_{d_1}$, $\nu_2 := p_R \ominus p_{d_1}$ and define $\varepsilon_1 := \tv{p_{d_1},p_R}$, $\varepsilon_2 := \tv{p_{d_2},p_R}$.
Suppose $\varepsilon_1 > 0$ and $p_{d_1} \ne p_{d_2}$.
Then $\tv{\nu_1,\nu_2} \leq \frac{\varepsilon_2}{\varepsilon_1}$.
\end{lemma}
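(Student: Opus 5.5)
The plan is to work with the unnormalized positive parts and to bound the total variation through one-sided sums $\sum_x[\cdot]_+$, choosing the side according to how the two normalizing constants compare. Write $a := [p_{d_2}-p_{d_1}]_+$ and $b := [p_R - p_{d_1}]_+$ (pointwise, as in \cref{eq:residual-op}). Their total masses are $Z_1 := \sum_x a(x) = \tv{p_{d_1},p_{d_2}}$ and $\sum_x b(x) = \varepsilon_1$, by the definition of total variation. The hypotheses $p_{d_1}\neq p_{d_2}$ and $\varepsilon_1>0$ ensure $Z_1>0$, so $\nu_1 = a/Z_1$ and $\nu_2 = b/\varepsilon_1$ are well defined.

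\textbf{Pointwise comparison.} The map $t\mapsto [t-c]_+$ is nondecreasing and $1$-Lipschitz. Hence, for every $x$,
\begin{align*}
[a(x)-b(x)]_+ \le [p_{d_2}(x)-p_R(x)]_+, \qquad [b(x)-a(x)]_+ \le [p_R(x)-p_{d_2}(x)]_+ .
\end{align*}
Summing over $x$ and using the definition of total variation gives
\begin{align*}
\sum_x [a-b]_+(x)\le \varepsilon_2, \qquad \sum_x [b-a]_+(x)\le\varepsilon_2 .
\end{align*}
It matters that we bound each one-sided sum by $\varepsilon_2$ separately. The cruder estimate $\|a-b\|_1\le 2\varepsilon_2$, combined with the usual normalization-perturbation argument, would only give $2\varepsilon_2/\varepsilon_1$.

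\textbf{Handling the normalization.} This is the step I expect to be the main obstacle, since the two distributions are normalized by different constants $Z_1$ and $\varepsilon_1$. Because both $\nu_1$ and $\nu_2$ are probability distributions, we have $\tv{\nu_1,\nu_2}=\sum_x[\nu_1-\nu_2]_+(x)=\sum_x[\nu_2-\nu_1]_+(x)$, so we may pick whichever side is convenient.

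If $Z_1\ge\varepsilon_1$, then $a/Z_1\le a/\varepsilon_1$ pointwise, so
\begin{align*}
\tv{\nu_1,\nu_2}=\sum_x\Bigl[\tfrac{a}{Z_1}-\tfrac{b}{\varepsilon_1}\Bigr]_+(x)\le \frac{1}{\varepsilon_1}\sum_x[a-b]_+(x)\le\frac{\varepsilon_2}{\varepsilon_1}.
\end{align*}

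If $Z_1<\varepsilon_1$, then $a/Z_1\ge a/\varepsilon_1$, so
\begin{align*}
\tv{\nu_1,\nu_2}=\sum_x\Bigl[\tfrac{b}{\varepsilon_1}-\tfrac{a}{Z_1}\Bigr]_+(x)\le\frac{1}{\varepsilon_1}\sum_x[b-a]_+(x)\le\frac{\varepsilon_2}{\varepsilon_1}.
\end{align*}

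In both cases the monotonicity of $[\cdot]_+$ lets us replace the smaller normalizer by $\varepsilon_1$ on the appropriate side, and the pointwise comparison then yields $\tv{\nu_1,\nu_2}\le \varepsilon_2/\varepsilon_1$.
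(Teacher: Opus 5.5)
Your proof is correct and follows the paper's argument essentially step for step. You split on whether $\mathrm{TV}(p_{d_1},p_{d_2})$ is at least or below $\varepsilon_1$, choose the one-sided sum for the total variation accordingly so that the smaller normalizer can be replaced by $\varepsilon_1$, and then apply the pointwise bound $[s]_+-[t]_+\le[s-t]_+$ (your monotone, $1$-Lipschitz phrasing) to compare with $\varepsilon_2$.
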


\begin{proof}[Proof of \cref{lmm:tv-upper-bound}]
We write $u:=[p_R-p_{d_1}]_+$, $v:=[p_{d_2}-p_{d_1}]_+$ and $\delta := \tv{p_{d_1},p_{d_2}}$.
By the definition of the residual distribution in \cref{eq:residual-op}, $\nu_2=\frac{u}{\varepsilon_1}$, $\nu_1=\frac{v}{\delta}$. Consider two cases depending on the relative sizes of $\delta$ and $\varepsilon_1$.

\textbf{Case 1}: $\delta \leq \varepsilon_1$.
\begin{align*}
\varepsilon_1\tv{\nu_1,\nu_2}
&=
\varepsilon_1\sum_x \max(0, \nu_2(x) - \nu_1(x)) \\
&=
\sum_x \max\left(0, u(x) - \frac{\varepsilon_1}{\delta}v(x)\right) \\
&\leq
\sum_x \max(0, u(x) - v(x)) \\
&\leq
\sum_x \max(0, p_R(x) - p_{d_2}(x)) = \varepsilon_2
\end{align*}
where the last inequality holds from the fact that $[a]_+ - [b]_+ \leq [a - b]_+$. Similarly, 

\textbf{Case 2}: $\delta > \varepsilon_1$.
\begin{align*}
\varepsilon_1\tv{\nu_1,\nu_2}
&=
\varepsilon_1\sum_x \max(0, \nu_1(x) - \nu_2(x)) \\
&=
\sum_x \max\left(0, \frac{\varepsilon_1}{\delta}v(x) - u(x)\right) \\
&\leq
\sum_x \max(0, v(x) - u(x)) \\
&\leq
\sum_x \max(0, p_{d_2}(x) - p_R(x)) = \varepsilon_2
\end{align*}

By combining both cases, $\tv{\nu_1,\nu_2} \leq \frac{\varepsilon_2}{\varepsilon_1}$. This completes the proof of \cref{lmm:tv-upper-bound}.
\end{proof}

We are now ready to prove \Cref{thm:flushing}.

\begin{proof}[Proof of \Cref{thm:flushing}]
Recall that the rejection probability from the first proposal is $\varepsilon_1 = \tv{p_{d_1}, p_R}$ and that from the second one is $\tv{p_R \ominus p_{d_1}, p_{d_2} \ominus p_{d_1}}$.

If $\varepsilon_1=0$, the first proposal is always accepted, so the probability of both candidates are rejected is bounded by $\Pr[\textrm{reject}_1\land\textrm{reject}_2] \leq \Pr[\textrm{reject}_1]=\varepsilon_1=0\leq\varepsilon_2$. If $p_{d_1}=p_{d_2}$, then $\varepsilon_1=\varepsilon_2$, so the probability of both candidates are rejected is bounded by $\Pr[\textrm{reject}_1\land\textrm{reject}_2] \leq \Pr[\textrm{reject}_1] =\varepsilon_1$.

Without loss of generality, we assume that $\varepsilon_1,\delta>0$. When the gate is open, \cref{lmm:tv-upper-bound} gives
\begin{align*}
    \Pr[\textrm{reject}_1 \wedge \textrm{reject}_2 \wedge \textrm{gate open}] \leq \varepsilon_1 \cdot \frac{\varepsilon_2}{\varepsilon_1} = \varepsilon_2.
\end{align*}
% Now consider the case with the gate. 
The gate in \Cref{eq:adaptive-gate} is closed when the primary candidate $\widetilde{x}^{(1)} \sim p_{d_1}$ satisfies $p_{d_2}(\widetilde{x}^{(1)}) \geq p_{d_1}(\widetilde{x}^{(1)})$, and there is no second proposal. Under this condition, by \Cref{eq:acceptance}, the probability of drawing $x$ and rejecting it is $p_{d_1}(x) - \min\left(p_{d_1}(x), p_R(x)\right) = [p_{d_1} - p_R]_+(x)$, and hence
\begin{align*}
    \Pr[\textrm{reject}_1 \wedge \textrm{gate closed}] 
    &= \sum_{x \,:\, p_{d_2}(x) \geq p_{d_1}(x)} [p_{d_1} - p_R]_+(x) \\
    &\leq \sum_{x \,:\, p_{d_2}(x) \geq p_{d_1}(x)} [p_{d_2} - p_R]_+(x) \\
    &\leq \sum_{x} [p_{d_2} - p_R]_+(x) = \varepsilon_2,
\end{align*}
where the first inequality holds from $p_{d_1}(x) \leq p_{d_2}(x)$ under the gate-closed condition. Adding the two bounds gives $2 \varepsilon_2$. This completes the proof of \Cref{thm:flushing}.
\end{proof}

\subsection{Proof of \cref{thm:depths-main}}

Let $C$ be a random variable of the number of recurrent steps to commit a single token. Then,
\begin{align*}
    C = \begin{dcases}
        d_1 &\text{if accepted in the first proposal},\\
        d_2 &\text{else if accepted in the second proposal},\\
        R &\text{otherwise}.
    \end{dcases}
\end{align*}
The expectation of $C$ can be computed as
\begin{align}
    \mathbb{E}[C]
    &= \Pr[\textrm{accept}_1]d_1 + \Pr[\textrm{accept}_2]d_2 + (1-\Pr[\textrm{accept}_1]-\Pr[\textrm{accept}_2])R.
    \label{eq:C}
\end{align}
By the property of rejection sampling and \Cref{thm:flushing},
\begin{align*}
    &\Pr[\textrm{accept}_1] = 1-\tv{p_{d_1},p_R} = 1-\varepsilon_1 \leq 1,\\
    &\Pr[\textrm{accept}_2] \leq \Pr[\textrm{reject}_1] = \varepsilon_1,\\
    &1-\Pr[\textrm{accept}_1]-\Pr[\textrm{accept}_2] \leq 2\varepsilon_2.
\end{align*}
For a prefix $X$, the assumption in \cref{eq:phi} yields 
\begin{align*}
    &\mathbb{E}_X[\varepsilon_1] = \mathbb{E}_X\left[\tv{p_{d_1},p_R}\right] \leq \phi(d_1) = \beta d_1^{-\alpha},\\
    &\mathbb{E}_X[\varepsilon_2] = \mathbb{E}_X\left[\tv{p_{d_2},p_R}\right] \leq \phi(d_2) = \beta d_2^{-\alpha}.
\end{align*}
Substituting these bounds into \Cref{eq:C} and taking the expectation over prefix $X$, we obtain
\begin{align}
    \mathbb{E}_X[C]
    \leq 
    d_1 + \beta d_1^{-\alpha} d_2 + 2 \beta d_2^{-\alpha} R.
    \label{eq:upper-C}
\end{align}

Our goal is to find optimal $d_1$ and $d_2$ that minimize the right-hand side in \cref{eq:upper-C}. Denote the right-hand side by $f(d_1, d_2)$. The stationary conditions are obtained by taking derivative with respect to $d_1$ and $d_2$, respectively, 
\begin{align*}
\frac{\partial f}{\partial d_1} &= 1 - \alpha \beta d_1^{-\alpha -1} d_2 = 0,   \\
\frac{\partial f}{\partial d_2} &= \beta d_1^{-\alpha} - 2 \alpha \beta R d_2^{-\alpha-1} = 0.
\end{align*}
Re-writing these conditions yields
\begin{align*}
    d_1^* &= (\alpha \beta)^{\frac{\alpha + 1}{\alpha^2 + \alpha + 1}} (2 \alpha R)^{\frac{1}{\alpha^2 + \alpha + 1}}, \\
    d_2^* &= (\alpha \beta)^{\frac{\alpha}{\alpha^2 + \alpha + 1}}(2 \alpha R)^{\frac{\alpha + 1}{\alpha^2 + \alpha + 1}}. \label{eq:solution}
\end{align*}
To verify above $(d_1^*, d_2^*)$ is the minimizer, we compute the Hessian of $f$ as
\begin{align*}
    H(d_1, d_2) = \begin{bmatrix}
        \alpha (\alpha+1)\beta d_1^{-\alpha-2} d_2 & -\alpha \beta d_1^{-\alpha-1} \\
        -\alpha \beta d_1^{-\alpha-1} & 2\alpha (\alpha+1)\beta R d_2^{-\alpha-2}
    \end{bmatrix}
\end{align*}
and show that the determinant of $H(d_1^*, d_2^*)$ is strictly positive. Using the fact that $R (d_2^*)^{-\alpha-1} = (d_1^*)^{-\alpha} / (2\alpha)$, we have
\begin{align*}
    \det \left(H(d_1^*, d_2^*)\right) = \alpha \beta^2 (d_1^*)^{-2\alpha - 2} \left( \alpha^2 + \alpha + 1\right) > 0.
\end{align*}
Since this is the unique stationary point, $(d_1^*, d_2^*)$ is the global minimizer. 
Finally, the condition \[R > \frac{1}{2\alpha}\max\{(\alpha\beta)^{1/\alpha},(\alpha\beta)^{-\alpha-1},(\alpha\beta)^{1/\alpha}(2\alpha)^{\frac{\alpha^2 + \alpha+1}{\alpha^2}}\}\] ensures the validity of solution, i.e., $1 \leq d_1^* < d_2^* < R$. This completes the proof of \cref{thm:depths-main}.

\vfill
\newpage

\section{\ourloop{} Algorithm Pseudocode}
\label{app:algorithm}

\begin{algorithm}[H]
\footnotesize
\SetAlgoNlRelativeSize{-1}
\caption{\ourloop{} with two proposals}
\label{alg:loop}
\SetKwInOut{Input}{Input}
\SetKwInOut{Global}{Global}
\SetKwFunction{Verify}{Verify}
\SetKwFunction{Spawn}{Spawn}
\SetKwProg{Fn}{Function}{:}{}
\SetKw{KwOutput}{output}
\Input{prompt $x_{1:n}$; pre-layers $\mathcal{P}$;
       recurrent-layers $\mathcal{B}$;
       post-layers $\mathcal{C}$;
       total recurrent depth $R$;
       proposal depths $d_1<d_2<R$}
\Global{active branch set $\mathcal A\gets\varnothing$;
        output string $Y\gets\varepsilon$}
A branch $b$ has its prefix $X^{(b)}$, current hidden state
$H_b$, recurrent depth $r_b$, and proposal records $\mathcal Q_b$\;
\BlankLine
\Fn{\Spawn{$X$}}{
  $c\gets\operatorname{NewBranch}()$\;
  $X^{(c)}\gets X$; \quad $H_c\gets\mathcal{P}(X^{(c)})$\;
  $r_c\gets 0$; \quad $\mathcal Q_c\gets\varnothing$\;
  $\mathcal A\gets\mathcal A\cup\{c\}$\;
  \KwRet{$c$}\;
}
\BlankLine
\Fn{\Verify{$b,p$}}{
  $\rho\gets p$\;
  \For{$j\gets1$ \KwTo $|\mathcal Q_b|$}{
    $(\widetilde{x}_{b}^{(j)},q_j,c_j)\gets\mathcal Q_b[j]$\;
    $U_j\sim\mathcal{U}(0,1)$\;
    \If{$U_j<\min\{1,\rho(\widetilde{x}_{b}^{(j)})/q_j(\widetilde{x}_{b}^{(j)})\}$}{
      \KwRet{$(\widetilde{x}_{b}^{(j)},c_j)$}\tcp*{accept}
    }
    $\rho\gets\rho\ominus q_j$\tcp*{reject}
  }
  \BlankLine
  $x\sim\rho$\;
  \KwRet{$(x,\operatorname{Spawn}(X^{(b)}\mathbin{\Vert}x))$}
}
\BlankLine
$H\gets\mathcal{P}(x_{1:n})$\;
\For{$r\gets1$ \KwTo $R$}{
  $H\gets\mathcal{B}(H)$\;
}
$x\sim\mathcal{C}(H)$\;
$Y\gets Y\mathbin{\Vert}x$\tcp*{commit first token after prefill}
\If{generation has not stopped}{
  $b_{\mathrm{init}}\gets\operatorname{Spawn}
    (x_{1:n}\mathbin{\Vert}x)$\;
}
\BlankLine
\While{generation has not stopped}{
  $\mathcal A_{\mathrm{cur}}\gets\mathcal A$\;
  \ForEach{$b\in\mathcal A_{\mathrm{cur}}$ \textbf{in parallel}}{
    $H_b\gets\mathcal{B}(H_b)$; \quad $r_b\gets r_b+1$\;
    \BlankLine
    \uIf{$r_b=d_1$ \tcp*[r]{first proposal}}{
      $q_1\gets\mathcal{C}(H_b)$; \quad $\widetilde{x}_{b}^{(1)}\sim q_1$\;
      $c_1\gets\operatorname{Spawn}(X^{(b)}\mathbin{\Vert}\widetilde{x}_{b}^{(1)})$\;
      $\mathcal Q_b[1]\gets(\widetilde{x}_{b}^{(1)},q_1,c_1)$\;
    }
    \ElseIf{$r_b=d_2$}{
      $(\widetilde{x}_{b}^{(1)},q_1,c_1)\gets\mathcal Q_b[1]$; \quad $\widetilde{q}_2\gets\mathcal{C}(H_b)$\;
      \If{$\widetilde{q}_2(\widetilde{x}_{b}^{(1)})<q_1(\widetilde{x}_{b}^{(1)})$ \tcp*[r]{gating mechanism}}{
        $q_2\gets\widetilde{q}_2\ominus q_1$; \quad $\widetilde{x}_{b}^{(2)}\sim q_2$ \tcp*[r]{residual second proposal}
        $c_2\gets\operatorname{Spawn}(X^{(b)}\mathbin{\Vert}\widetilde{x}_{b}^{(2)})$\;
        $\mathcal Q_b[2]\gets(\widetilde{x}_{b}^{(2)},q_2,c_2)$\;
      }
    }
  }
  \BlankLine
  $b_{\max}\gets\arg\max_{b\in\mathcal A}r_b$\;
  \If{$r_{b_{\max}}=R$}{
    $p\gets\mathcal{C}(H_{b_{\max}})$\;
    $(x^\star,c^\star)\gets\operatorname{Verify}(b_{\max},p)$\;
    $Y\gets Y\mathbin{\Vert}x^\star$\tcp*{commit a new decoded token}
    $\mathcal A\gets\operatorname{Reroot}(\mathcal A,c^\star)$\tcp*{keep $c^\star$ and its descendants}
  }
}
\KwOutput{$Y$}\;
\end{algorithm}

\end{document}